\newcommand{\beq}{\begin{equation}}
\newcommand{\eeq}{\end{equation}}
\newcommand{\baln}{\begin{align*}}
\newcommand{\ealn}{\end{align*}}
\newcommand{\beqn}{\begin{equation*}}
\newcommand{\eeqn}{\end{equation*}}
\newcommand{\beqa}{\begin{eqnarray}}
\newcommand{\eeqa}{\end{eqnarray}}
\newcommand{\beqan}{\begin{eqnarray*}}
\newcommand{\eeqan}{\end{eqnarray*}}
\newtheorem*{theorem*}{Theorem}
\newtheorem{proposition}{Proposition}
\newtheorem{theorem}{Theorem}
\newtheorem{lemma}{Lemma}
\newtheorem{corollary}{Corollary}
\newtheorem{remark}{Note}
\newcommand{\norm}[1]{\left\lVert#1\right\rVert}
\newcommand{\expect}[1]{\mathbb{E}\left[{#1}\right]}
\newcommand{\prob}[1]{\mathbb{P}\left[{#1}\right]}
\newcommand{\given}{\; \big\vert \;}
\newcommand{\argmax}{\mathop{\mathrm{argmax}}}
\newcommand{\argmin}{\mathop{\mathrm{argmin}}}
\newcommand{\clip}{\texttt{Clip}}
\newcommand{\mC}{\mathcal{C}}
\newcommand{\mE}{\mathcal{E}}
\newcommand{\mP}{\mathcal{P}}
\newcommand{\reals}{\mathbb{R}}
\newcommand{\mS}{\mathcal{S}}
\newcommand{\mX}{\mathcal{X}}
\newcommand{\mY}{\mathcal{Y}}
\title{Explicit Best Arm Identification in Linear Bandits Using No-Regret Learners}
\author{\thanks{Under review. Please do not distribute.}
 Mohammadi Zaki\\
 Electrical Communication Engineering,\\
 Indian Institute of Science,\\
 Bangalore 560012. \\
 \texttt{mohammadi@iisc.ac.in},
\And
Avi Mohan \\
 Faculty of Electrical Engineering,\\
 Technion, Israel Institute of Technology,\\
 Haifa 3200003.\\
\texttt{avinashmohan@campus.technion.ac.il} 
\And
Aditya Gopalan\\
 Electrical Communication Engineering,\\
    Indian Institute of Science,\\
    Bangalore 560012. \\
 \texttt{aditya@iisc.ac.in} 
}
\begin{document}
\maketitle
\allowdisplaybreaks
%
\begin{center}
\textbf{\textit{Abstract.}}	\\
\end{center}

We study the problem of best arm identification in linearly parameterised multi-armed bandits. Given a set of feature vectors $\mathcal{X}\subset\mathbb{R}^d,$ a confidence parameter $\delta$ and an unknown vector $\theta^*,$ the goal is to identify $\argmax_{x\in\mathcal{X}}x^T\theta^*$, with probability at least $1-\delta,$ using noisy measurements of the form $x^T\theta^*.$ For this fixed confidence ($\delta$-PAC) setting, we propose an explicitly implementable and provably order-optimal sample-complexity algorithm to solve this problem. Previous approaches rely on access to minimax optimization oracles. 
The algorithm, which we call the {\em Phased Elimination Linear Exploration Game} (PELEG), maintains a high-probability confidence ellipsoid containing $\theta^*$  in each round and uses it to eliminate suboptimal arms in phases. PELEG achieves fast shrinkage of this confidence ellipsoid along the most confusing (i.e., close to, but not optimal) directions by interpreting the problem as a two player zero-sum game, and sequentially converging to its saddle point using low-regret learners to compute players' strategies in each round. %
We analyze the sample complexity of PELEG and show that it matches, up to order, an instance-dependent lower bound on sample complexity in the linear bandit setting. %
We also provide numerical results for the proposed algorithm consistent with its theoretical guarantees. 



\section{Introduction}

Function optimization over structured domains is a basic sequential decision making problem. A well-known formulation of this problem is Probably Approximately Correct (PAC) best arm identification in multi-armed bandits \citep{evendar}, in which a learner is given a set of arms with unknown (and unrelated) means. The learner must sequentially test arms and output, as soon as possible with high confidence, a near-optimal arm (where optimality is defined in terms of the largest mean).


Often, the arms (decisions) and their associated rewards, possess structural relationships, allowing for more efficient learning of the rewards and transfer of learnt information, e.g., two `close enough' arms may have similar mean rewards.  One of the best-known examples of structured decision spaces is the linear bandit, whose arms are vectors (points) in $\mathbb{R}^d$. The reward or function value of an arm is an unknown linear function of its vector representation, and the goal is to find an arm with maximum reward in the shortest possible time by measuring arms' rewards sequentially with noise. This framework models an array of structured online linear optimization problems including adaptive routing \citep{awerbuch2008online}, smooth function optimization over graphs \citep{valko2014spectral}, subset selection \citep{kuroki2019polynomial} and, in the nonparametric setting, black box optimization in smooth function spaces \citep{srinivas2009gaussian}, among others.

Although no-regret online learning for linear bandits is a well-understood problem (see \citep{LatSze2020book} and references therein), the PAC-sample complexity of best arm identification in this model has not received significant attention until recently \citep{soare}. The state of the art here is the work of \citet{jamieson-etal19transductive-linear-bandits}, who give an algorithm with optimal (instance-dependent) PAC sample complexity. However, a closer look indicates that the algorithm assumes repeated oracle access to a minimax optimization problem\footnote{This is, in fact, a plug-in version of a minimax optimization problem representing an information-theoretic sample complexity lower bound for the problem.}; it is not clear, from a performance standpoint, in what manner (and to what accuracy) this optimization problem should be {\em practically} solved\footnote{For its experiments, the paper implements a (approximate) minimax oracle using the Frank-Wolfe algorithm and a heuristic stopping rule, but this is not rigorously justifiable for nonsmooth optimization, see Sec. \ref{sec:Algorithm}.} to enjoy the claimed sample complexity. Hence, the question of how to design an explicit algorithm with optimal PAC sample complexity for best arm identification in linear bandits has remained open. 

In this paper, we resolve this question affirmatively by giving an explicit linear bandit best-arm identification algorithm with instance-optimal PAC sample complexity and, more importantly, a clearly quantified computational effort. 
We achieve this goal using new techniques: the main ingredient in the proposed algorithm is a game-theoretic interpretation of the minimax optimization problem that is at the heart of the instance-based sample complexity lower bound. This in turn yields an adaptive, sample-based approach using carefully constructed confidence sets for the unknown parameter $\theta^*$. The adaptive sampling strategy is driven by the interaction of 2 no-regret online learning subroutines that attempt to solve the minimax problem approximately, obviating the worry of i) solving the optimal minimax allocation to a suitable precision and ii) making an integer sampling allocation from it by rounding, which occur in the approach of Fiez et al  \cite{jamieson-etal19transductive-linear-bandits}. We note that the seeds of this game-theoretic approach were laid by the recent work of \citet{degenne-etal19non-asymptotic-exploration-solving-games} for the simple (i.e., unstructured) multiarmed bandit problem. However, our work demonstrates a novel extension of their methodology to solve best-arm learning in structured multi-armed bandits for the first time to the best of our knowledge.

\subsection{Related Work}
The PAC best arm identification problem for linearly parameterised bandits is  first studied in \citep{soare}, in which an adaptive algorithm is given with a sample complexity guarantee involving a hardness term ($M^*$) which in general renders the sample complexity suboptimal. Tao et al \citep{Tao_et-al-ICML-2018} take the path of constructing new estimators instead of ordinary least squares, using which they give an algorithm achieving the familiar sum-of-inverse-gaps sample complexity known for standard bandits; this is, however, not optimal for general linear bandits. The LinGapE algorithm \citep{XuAISTATS}
 is an attempt at solving best arm identification with a {\em fully adaptive} strategy, but its sample complexity in general is not instance-optimal and can additionally scale with the total number of arms, in addition to the extra dimension-dependence known to be incurred by self-normalized inequality-based confidence set constructions \citep{AbbPalCsa11:linbandits}. Zaki et al \citep{zaki-etal19towards-optimal-efficient-bai-GLUCB} design a fully adaptive algorithm based on the Lower-Upper Confidence Bound (LUCB) principle \citep{Kalyanakrishnan2012PACSS} with limited guarantees for 2 or 3 armed settings. Fiez et al \cite{jamieson-etal19transductive-linear-bandits} give a phased elimination algorithm achieving the ideal information-theoretic sample complexity but with minimax oracle access and an additional rounding operation; we detail an explicit arm-playing strategy that eliminates both these steps, in the same high-level template.  In a separate vein, game-theoretic techniques to solve minimax problems have been in existence for over a couple of decades \citep{freund1999adaptive}; only recently have they been combined with optimism to give a powerful framework to solve adaptive hypothesis testing problems \citep{degenne-etal19non-asymptotic-exploration-solving-games}. 
 
 Table~\ref{table:sampleComplexityComparison} compares the sample complexities of various best arm identification algorithms in the literature.



\section{Problem Statement and Notation}\label{Problem}

We study the problem of best arm identification in linear bandits with the arm set $\mathcal{X}\equiv \{x_1,x_2,\ldots,x_K \}$, where each arm\footnote{In general, the number of arms can be much larger than the ambient dimension, i.e., $d\ll K.$} $x_a$ is a vector in $\mathbb{R}^d$. We will interchangeably use $\mathcal{X}$ and the set $[K]\equiv\{1,2,\ldots,K\}$, whenever the context is clear. In every round $t=1,2,\ldots$ the agent chooses an arm $x_t \in  \mathcal{X} $, and receives a reward $y(x_t)={\theta^*}^Tx_t + \eta_t$, where $\theta^*$ is assumed to be a fixed but unknown vector, and $\eta_t$ is zero-mean noise assumed to be conditionally $1-$ subgaussian, i.e., $\forall \gamma\in \mathbb{R}, \expect{e^{\gamma \eta_t}|x_{1},x_{2},\ldots,x_{{t-1}}, \eta_1,\eta_2,\ldots,\eta_{t-1}} \leq \exp\Big(\frac{\gamma^2}{2} \Big)$. 
We denote by $\nu^k_{\theta^*}$ the distribution of the reward obtained by pulling arm $k\in[K],$ i.e., $\forall t\geq1, y(x_t)\sim\nu^k_{\theta^*},$ whenever $x_t=x_k.$ Given two probability distributions $\mu,\nu$ over $\reals$, $KL(\mu,\nu)$ denotes the KL Divergence of $\mu$ and $\nu$ (assuming $\mu\ll\nu$). Given $\theta\in\reals^d$, let $a^* \equiv a^*(\theta)=\argmax\limits_{a\in [K]} {\theta}^Tx_a$, where we assume that $\theta$ is such that the argmax is unique.

A learning algorithm for the best arm identification problem comprises the following rules: (1) a {\em sampling rule}, which determines based on the past play of arms and observations, which arm to pull next, (2) a {\em stopping rule}, which controls the end of sampling phase and is a function of the past observations and reward, and (3) a {\em recommendation rule}, which, when the algorithm stops, offers a guess for the best arm. The goal of a learning algorithm is: Given an error probability $\delta>0$,  identify (guess) $ a^* $ with probability $\geq 1-\delta$ by pulling as few (in an expected sense) arms as possible. Any algorithm that (1) stops with probability 1 and (2) returns $a^*$ upon stopping with probability at least $1-\delta$ is said to be  \emph{$\delta$-Probably Approximately Correct ($\delta$-PAC).} For clarity of exposition, we distinguish the above {\em linear bandit} setting from what we term the {\em unstructured bandit} setting, wherein $K=d,$ and $x_i=\hat{e}_i,~\forall k\in[K]$ the canonical basis vectors (the former setting generalizes the latter). The (expected) number of samples $\tau\in\mathbb{N}$ consumed by an algorithm in determining the optimal arm in any bandit setting (not necessarily the {\em linear} setting) is called its {\em sample complexity.}

  \par In the rest of the paper, we will assume that $\norm{x_k}_2\leq 1, \forall x_k\in \mathcal{X}$. Given a positive definite matrix $A$, we denote by $\norm{x}_{A}:=\sqrt{x^TAx}$, the matrix norm induced by $A$. For any $i\in [K], i\neq a^*$, we define $\Delta_{i}:={\theta^*}^T(x_{a^*}-x_{i})$ to be the gap between the largest expected reward and the expected reward of (suboptimal) arm $x_i$. Let $\Delta_{\min}:=\min\limits_{\substack{i\in[K]}} \Delta_i$. We denote $B(z,r)$ as the \textit{closed} ball with center $z$ and radius $r$. For any measurable space $\left(\Omega,\mathcal{F}\right),$ we define $\mP(\Omega)$ to be the set of all probability measures on $\Omega$. $\tilde{\mathcal{O}}$ is big-Oh notation that suppresses logarithmic dependence on problem parameters. For the benefit of the reader, we provide a glossary of commonly used symbols in Sec.~\ref{sec:glossaryOfSymbols} in the Appendix.
   
    
    \begin{table}[tbh]

    \resizebox{\textwidth}{!}{%
    
    \begin{tabular}{c|c|c}
      \toprule
      {\bf Algorithm} & {\bf Sample Complexity} & {\bf Remarks}\\
      \hline
      {$\mathcal{X} \mathcal{Y}$-static} \citep{soare} & $\mathcal{O}\Big(\frac{d}{\Delta_{\min}}(\ln\frac{1}{\delta} + \ln K + \ln \frac{1}{\Delta_{min}}) +d^2\Big)$ & \begin{tabular}{c} Static allocation, worst-case optimal\\ Dependence on $d$ cannot be removed\end{tabular}\\
      {LinGapE}\footnote{Here $H_0$ is a complicated term defined in terms of a solution to an offline optimization problem in \cite{XuAISTATS}.} \citep{XuAISTATS} & $\mathcal{O}\Big(dH_0\log\Big(dH_0\log\frac{1}{\delta}\Big) \Big)$ & Fully adaptive, sub-optimal in general.\\
      {ALBA} \citep{Tao_et-al-ICML-2018} & $\mathcal{O}\Big( \sum_{i=1}^d \frac{1}{\Delta^2_{(i)}} \ln\left(\frac{K}{\delta}+ \ln \frac{1}{\Delta_{min}}\right) \Big)$ & Fully adaptive, sub-optimal in general (see \cite{jamieson-etal19transductive-linear-bandits})\\
     {RAGE} \citep{jamieson-etal19transductive-linear-bandits} & $\mathcal{O}\left(\frac{1}{D_{\theta^*}}\log{1/\Delta_{\min}}\log{\left(\frac{K^2\log^2{1/\Delta_{min}}}{\delta}\right)} \right)$ & \begin{tabular}{c}Instance-optimal, but  \\ Minimax oracle required \end{tabular} \\
     {PELEG (this paper)} & $\mathcal{O}\left(\frac{\log_2\left(1/\Delta_{min}\right)}{D_{\theta^*}}\left[\frac{\log^2\left(\left(\log_2\left(1/\Delta_{min}\right)\right)^2K^2/\delta\right)}{C^2} \right] \right)$ &  \begin{tabular}{c}{Instance-optimal} (upto a factor of $ C^2 $),\\ \bf{Explicitly implementable} \\\end{tabular} \\
      \bottomrule
    \end{tabular}}
  
  \caption{Comparison of Sample complexities achieved by various algorithms for the Linear Multi-armed Bandit problem in the literature. Note that $K$ is the number of arms, $d$ is the ambient dimension, $\delta$ is the PAC guarantee parameter and $\Delta_{min}$ is the minimum reward gap. $H_0$ is a complicated term defined in terms of a solution to an offline optimization problem in \cite{XuAISTATS}.
    }
    \label{table:sampleComplexityComparison}
  \end{table}
  
Note: $C:=\lambda_{\min}\left(\sum_{x\in \mX}xx^T\right)$  is a term depending only on the geometry of the arm set.  
 \section{Overview of Algorithmic Techniques}\label{sec:algorithmicIngredients}
 In this section we describe the main ingredients in our algorithm design and how they build upon ideas introduced in recent work \cite{degenne-etal19non-asymptotic-exploration-solving-games, jamieson-etal19transductive-linear-bandits} (the explicit algorithm appears in Sec. \ref{sec:Algorithm}).
 
{\bf The phased elimination approach: Fiez et al \cite{jamieson-etal19transductive-linear-bandits}.} 
  We first note that a lower bound on the sample complexity of any $\delta$- PAC algorithm for the canonical (i.e., unstructured) bandit setting \cite{garivier-kaufmann16optimal-best-arm-fixed-confidence} was generalized by Fiez et al  \cite{jamieson-etal19transductive-linear-bandits} to the linear bandit setting, assuming $\{\eta_t\}_{t\geq1}$ to be standard normal random variables. This result states that any $\delta$-PAC algorithm in the linear setting must satisfy $\mathbb{E}_{\theta^*}[\tau] \geq \left(\log{1/2.4\delta}\right) \frac{1}{T_{\theta^*}} \geq \left(\log{1/2.4\delta}\right) \frac{1}{D_{\theta^*}}$, %
  where $T_{\theta^*}:= \max_{w\in\mP(\mX)} \min_{\theta:a^*(\theta)\neq a^*(\theta^*)} \sum_{k\in[K]}w_kKL\left(\nu^k_{\theta},\nu^k_{\theta^*}\right)$ and $D_{\theta^*}:=\max\limits_{w\in\mP(\mX)} \min\limits_{x\in \mX, x\neq x^*}\frac{\left({\theta^*}^T\left(x^*-x\right)\right)^2}{\norm{x^*-x}^2_{\left(\sum_{x\in\mathcal{X}}w_xxx^T\right)^{-1}}}$, where $x^*=x_{a^*}$. The bound suggests a natural $\delta$-PAC strategy, namely, to sample arms according to the distribution 
  \begin{equation}
      w^* = \argmin_{w\in\mP(\mX)} \max_{x\in\mathcal{X}\setminus\{x^*\}}\frac{\parallel x^*-x \parallel^2_{\left(\sum_{x\in\mathcal{X}}w_xxx^T\right)^{-1}}}{\left(\left(x^*-x\right)\theta^*\right)^2}. \label{eq:wStar}
  \end{equation}
  In fact, as \cite[Sec.~2.2]{jamieson-etal19transductive-linear-bandits}  explains, using the Ordinary Least Squares (OLS) estimator $\hat{\theta}$ for $\theta^*$ and sampling arm~$x\in\mX$ exactly $2\lfloor w^*N \rfloor$ times with $N=\mathcal{O}\left(\frac{\log{K/\delta}}{D_{\theta^*}}\right)$ ensures $(x-x^*)^T\hat{\theta}>0,~\forall x\neq x^*$ with probability $\geq1-\delta.$ Unfortunately, this sampling distribution cannot directly be implemented since $x^*$ is unknown. 
  
  Fiez et al circumvent this difficulty by designing a nontrivial strategy (RAGE) that attempts to mimic the optimal allocation $w^*$ in phases. Specifically, in phase $m$, it tries to eliminate arms that are about $2^{-m}$-suboptimal (in their gaps), by solving (\ref{eq:wStar}) with a plugin estimate of $\theta^*$. The resulting fractional allocation, passed through a separate discrete rounding procedure, gives an integer pull count distribution which ensures that all surviving arms' mean differences are estimated with high precision and confidence. 
  
  Though direct and appealing, this phased elimination strategy is based crucially on solving minimax problems of the form (\ref{eq:wStar}). Though the inner (max) function is convex as a function of $w$ on the probability simplex (see e.g., Lemma 1 in \citep{zaki-etal19towards-optimal-efficient-bai-GLUCB}), it is {\em non-smooth}, and it is not made explicit  how, and to what extent, it must be solved in \cite{jamieson-etal19transductive-linear-bandits}. Fortunately, we are able to circumvent this obstacle by using ideas from games between no-regret online learners with optimism, as introduced by the work of Degenne et al \citep{degenne-etal19non-asymptotic-exploration-solving-games} for unstructured bandits.

  
  
 {\bf From Pure-exploration Games to $\delta$-PAC Algorithms: Degenne et al \cite{degenne-etal19non-asymptotic-exploration-solving-games}.}
 We briefly explain some of the insights in \cite{degenne-etal19non-asymptotic-exploration-solving-games} that we leverage to design an explicit linear bandit-$\delta$-PAC algorithm with \emph{low computational complexity.} For a fixed weight parameter $\theta^*\in\reals^d,$ consider the two-player, zero-sum \emph{Pure-exploration Game} in which the $MAX$ player (or column player) plays an arm $k\in[K]$ while the $MIN$ (or row) player chooses an alternative bandit model $\theta\in\reals^d$ such that $a^*(\theta)\neq a^*.$ $MAX$ then receives a payoff of $\sum_{k\in[K]}KL\left(\nu^k_{\theta^*},\nu^k_{\theta}\right)$ from $MIN.$ For a given $w\in\mP(\mX),$ define $T_{\theta^*}(w) = \min_{\theta:a^*(\theta)\neq a^*}\sum_{x\in\mX}w_xxx^T,$ and $w^*(\theta^*)$ the mixed strategy that attains $T_{\theta^*}.$ 
 With $MAX$ moving first and playing a mixed strategy $w\in\mP(\mX),$ the value of the game becomes $T_{\theta^*}$. In the unstructured bandit setting, to match the sample complexity lower bound, any algorithm must essentially sample arm $k\in[K]$ at rate $\frac{N^K_t}{t}\rightarrow w^*_k(\theta^*),$ where $N^k_t$ is the number of times Arm~$k$ has been sampled up to time $t$ \citep{Kaufman16}. This helps explain why any $\delta$-PAC algorithm implicitly needs to solve the Pure Exploration Game $T_{\theta^*}.$ 
 
 We crucially employ no-regret online learners to solve the Pure Exploration Game for linear bandits. More precisely, no-regret learning with the well-known Exponential Weights rule/Negative-entropy mirror descent algorithm \citep{shalev2011online} on one hand, and a best-response convex programming subroutine on the other, provides a {\em direct} sampling strategy that obviates the need for separate allocation optimization and rounding for sampling as in \citep{jamieson-etal19transductive-linear-bandits}. One crucial advantage of our approach (inspired by \cite{degenne-etal19non-asymptotic-exploration-solving-games}) is that we only use a best response oracle to solve for $T_{\theta^*}(w)$, which gives us a computational edge over \cite{jamieson-etal19transductive-linear-bandits} who employ the  computationally more costly max-min oracle to solve $T_{\theta^*}(w),$ or, its linear bandit equivalent, $D_{\theta^*}.$
\section{Algorithm and Sample Complexity Bound}\label{sec:Algorithm}

Our algorithm, that we call \enquote{Phased Elimination Linear Exploration Game} (PELEG), is presented in detail as  Algorithm~\ref{alg:PEPEG-mixed}. PELEG proceeds in phases with each phase consisting of multiple rounds, maintaining a set of \emph{active} arms $\mX_m$ for testing during Phase~$m$. An OLS estimate $\hat{\theta}_m$ of $\theta^*$ is used to estimate the mean reward of active arms and, at the end of phase $m$, every active arm with a plausible reward more than $\approx 2^{-m}$ below that of some arm in $\mX_m$ is eliminated. 
Suppose $\mS_m:=\left\{ x\in\mX\setminus\{x^*\}: {\theta^*}^T\left(x^*-x\right)<\frac{1}{2^{m}}\right\}$. If we can ensure that $\mX_m\subset\mS_m$ in every Phase~$m\geq1,$ then PELEG will terminate within $\lceil\log_2(1/\Delta_{\min})\rceil$ phases, where $\Delta_{\min} = \min_{x\neq x^*}{\theta^*}^T\left(x^*-x\right).$ This statement is proved in Corollary~\ref{corollary:finite number of phases} in the Supplementary Material.

If we knew $\theta^*,$ then we could sample arms according to the optimal distribution $w^*$ in \eqref{eq:wStar}. However, since all we now have at our disposal is the knowledge that $\Delta_i\leq2^{-m},~\forall x_i\in\mX_m,$ we can instead construct a sampling distribution $w^*_m$ by solving the surrogate  
$w^*_m = \argmin_{w\in\mP(\mX)}\max_{x,x'\in\mX_m:x\neq x'}\parallel x-x' \parallel^2_{\left(\sum_{x\in\mX}w_xxx^T\right)^{-1}}$,
    and sampling each arm in $\mX_m$ sufficiently often to produce a small enough confidence set. This is precisely what RAGE \citep{jamieson-etal19transductive-linear-bandits} does. However solving this optimization is, as mentioned in Sec.~\ref{sec:algorithmicIngredients}, computationally expensive and RAGE repeatedly accesses a minmax oracle to do this. Note that in simulating this algorithm, the authors implement an approximate oracle using the Frank-Wolfe method to solve the outer optimization over $w$ \cite[Sec.~F]{jamieson-etal19transductive-linear-bandits}. The $\max$ operation, however, renders the optimization objective non-smooth, and it is well-known that the Frank-Wolfe iteration can fail with even simple non-differentiable objectives (see e.g., \cite{cheung-yuying18solving-separable-nonsmooth-franke-wolf}). We, therefore, deviate from RAGE at this point by employing three novel techniques, the first two motivated by ideas in \cite{degenne-etal19non-asymptotic-exploration-solving-games}.
\begin{itemize}
    \item We formulate the above minimax problem as a two player, zero-sum game. We solve the game sequentially, converging to its Nash equilibrium by invoking the use of the EXP-WTS algorithm \cite{cesa-bianchi06prediction-learning-games}. Specifically, in each round $t$ in a phase, PELEG supplies EXP-WTS with an appropriate loss function $l^{MAX}_{t-1}$ and receives the requisite sampling distribution $w_t$ (lines \ref{PELEGStep:ExpWtsUsage1} \& \ref{PELEGStep:ExpWtsUsage2} of the algorithm). This $w_t$ is then fed to the second no-regret learner -- a best response subroutine -- that finds the `most confusing' plausible model $\lambda$ to focus next on (line \ref{PELEGStep:minPlayerStrategyLambdaT}). This is a minimization of a quadratic function over a union of finitely many convex sets (halfspaces intersecting a ball) which can be transparently implemented in polynomial time. 
   
    \item Once the sampling distribution is found, there still remains the problem of actually sampling according to it. Given a distribution $w\in\mP(\mX_m),$ approximating it by sampling $x\in\mX$ $\lfloor Nw_x\rfloor$ or $\lceil Nw_x\rceil$ times can lead to too few (resp. many) samples. Other naive sampling strategies are, for the same reason, unusable. While \cite{jamieson-etal19transductive-linear-bandits} invokes a specialized \emph{rounding algorithm} for this purpose, we opt for a more efficient \emph{tracking} procedure (line \ref{PELEGstep:tracking}): In each Round~$t$ of Phase~$m$, we sample Arm~$k_t:=\argmin\limits_{k\in [K]}n_{t-1}^k/\sum\limits_{s=1}^{t}w_s^k$, where $n^k_t$ is the number of times Arm~$k$ has been sampled up to time $t.$ In Lem.~\ref{lemma:tracking}, we show that this procedure is efficient, i.e., $ \sum\limits_{s=1}^{t}w_s^k-\left(K-1\right)\leq n_t^k\leq \sum\limits_{s=1}^{t}w_s^k +1.$
    \item Finally, in each phase~$m$, we need to sample arms often enough to (i) construct confidence intervals of size at most $2^{-(m+1)}$ around $(x-x')^T\theta^*,~\forall x,x'\in\mX_m,$ (ii) ensure that $\mX_m\subset\mS_m$ and (iii) that $x^*\in\mX_m.$ In Sec.~\ref{sec:justificationOfEliminationCriteria}, we prove a Key Lemma (whose argument is discussed in Sec.~\ref{sec:sketchSampleCmplxtyAnalysis}) to show that our novel \emph{Phase Stopping Criterion} ensures this with high probability.
\end{itemize}

It is worth remarking that naively trying to adapt the strategy of Degenne et al \cite{degenne-etal19non-asymptotic-exploration-solving-games} to the linear bandit structure yields a suboptimal (multiplicative $\sqrt{d}$) dependence in the sample complexity, thus we adopt the phased elimination template of Fiez et al \cite{jamieson-etal19transductive-linear-bandits}. We also find, interestingly, that this phased structure eliminates the need to use more complex, self-tuning online learners like AdaHedge \citep{AdaHedge} in favour of the simpler Exponential Weights (Hedge). 
%
   
   \begin{algorithm}[!htb]
      \allowdisplaybreaks
       	\caption{{\bf Phased Elimination Linear Exploration Game (PELEG)}} \label{alg:PEPEG-mixed}
       	\begin{algorithmic}[1]
       	
       	\State \textbf{Input:} $\mX, \delta.$
       	
       	\State \textbf{Init:} $m\leftarrow 1, \mX_m \leftarrow \mX .$
       	
       	\State $C\leftarrow\lambda_{min}\left(\sum\limits_{k=1}^{K}x_kx_k^T\right).$
       	
       	\While {$\left\{\abs{\mX_m} >1 \right\}$}
       	
       	\State $\delta_m \leftarrow \frac{\delta}{m^2}.$
       	
       	\State $D_m \leftarrow 2(\sqrt{2}-1)\sqrt{\frac{{C}}{{\max\limits_{x,x'\in \mX_m, x\neq x'}\norm{x-x'}_2^2\log K}}}$
       	
       	\State $\epsilon_m \leftarrow \min\left\{1, \frac{D_m\sqrt{C}}{\sqrt{8\log\left(K^2/\delta_m\right)}} \right\}\left(\frac{1}{2}\right)^{m+1}. $ \label{PELEGstep:defnOfEpsilonM}
       	
       	\State $\forall x\in \mX_m, \mathcal{C}_m(x):= \left\{\lambda\in \Real^d: \exists x'\in \mX_m, x'\neq x| \lambda^Tx'\geq\lambda^Tx+\epsilon_m \right\}$.

       	\State $ t\leftarrow 1, n_0^k\leftarrow 0, \forall k\in [K] $.
       	
       	\State Play each arm in $\mX$ once and collect rewards $Y_k\sim \nu_k, 1\leq k\leq K$. \Comment{{\color{blue}Burn-in period}}
       	
       	\State $\forall k\in [K],~n_k^t\bigg|_{t=K}=n_k^K\leftarrow 1$, $V_t^m\bigg|_{t=K}=V_K^m\leftarrow\sum\limits_{k=1}^{K}x_kx_k^T$, $ t\leftarrow K $. 
       	
       	
       	\State Initialize $\mathcal{A}^{MAX}_m\equiv EXP-WTS$ with expert set $\{\hat{e}_1,\cdots,\hat{e}_K\}\subset\reals^K$ and loss function $l^{MAX}_{t-1}()$.{\color{blue}\Comment{$MAX$ player: EXP-WTS}}

       	\While {$\left\{\min\limits_{\lambda \in \bigcup\limits_{x\in \mX_m} \mathcal{C}_m(x)\cap B\left(0,D_m\right) } \norm{\lambda}^2_{V_t^m} \leq 8\log \left(K^2/{\delta_m}\right) \right\}$} {\color{blue}\Comment{Phase Stopping Criterion}}
       	\label{PELEGStep:phaseStoppingCriterion}
       	
       	\State $t\leftarrow t+1$.

       	\State Get $w_t$ from $\mathcal{A}^{MAX}_m$ and form the matrix $W_t=\sum\limits_{k=1}^{K}w_t^kx_kx_k^T$. \label{PELEGStep:ExpWtsUsage1}
       	
       	\State $\lambda_t \leftarrow \argmin\limits_{\lambda \in  \cup_{x\in \mX_m}\mathcal{C}_m(x) \cap B\left(0,D_m\right) } \norm{\lambda}^2_{W_t} $.\label{PELEGStep:minPlayerStrategyLambdaT}
       	
		\State For $ k\in [K], U_t^k:= \left(\lambda_t^Tx_k\right)^2.$ {\color{blue}\Comment{$MIN$ player: Best response}}
       	
       	\State Construct loss function $l_t^{MAX}\left(w\right) = -w^TU_t$ . \label{PELEGstep:passLoss2ExpWts} 
       	\label{PELEGStep:ExpWtsUsage2}
       	
       	\State Play arm $k_t:=\argmin\limits_{k\in [K]}\frac{n_{t-1}^k}{\sum\limits_{s=1}^{t}w_s^k}$ {\color{blue}\Comment{Tracking}} \label{PELEGstep:tracking}
       	    	
       	\State $n_{t}^{k_t}\leftarrow n_{t}^{k_t}+1$
       	    	
       	\State Collect sample $Y_t={\theta^*}^Tx_{k_t}+\eta_t $ 
       	        
       	\State $V_{t}^m = V_{t-1}^m+ x_{k_t}{x_{k_t}}^T.$
       	
       	\EndWhile
       	
       	\State $N_{m}\leftarrow t$
       	
       	\State Update: $\hat{\theta}_{m}\leftarrow \left({V^{m}_{N_m}}\right)^{-1}\left(\sum\limits_{s=1}^{N_m}Y_sx_{k_s}\right)$ {\color{blue}\Comment{Least-squares estimate of $\theta^*$}}
       	    
       	\State Update: $\mathcal{X}_{m+1}\leftarrow \mX_m\Big\backslash \left\{ x\in \mathcal{X}_m | \exists x' \in \mX_m: {\hat{\theta}_m}^T\left(x'-x \right)> 2^{-(m+2)}  \right\} $

       	\State $m\leftarrow m+1$

       	\EndWhile
       	
       \State \Return $\mathcal{X}_m$ {\color{blue}\Comment{Output surviving arm}}
       
       	\end{algorithmic}

       \end{algorithm}
  



The main theoretical result of this paper is the following performance guarantee.

\begin{theorem}[Sample Complexity of Algorithm \ref{alg:PEPEG-mixed}]\label{theorem:sampleComplexity}
 With probability at least $1-\delta$, PELEG returns the optimal arm after $\tau$ rounds, with
	\begin{equation}
	\begin{split}
	\tau \leq 2048\frac{\log_2\left(1/\Delta_{min}\right)}{D_{\theta^*}}\left[\frac{\left(\log\left(\left(\log_2\left(1/\Delta_{min}\right)\right)^2K^2/\delta\right)\right)^2\log K}{C^2} \right] + \\ 256\frac{\log_2\left(1/\Delta_{min}\right)}{D_{\theta^*}}\log\left(\left(\log_2\left(1/\Delta_{min}\right)\right)^2K^2/\delta\right) = \tilde{\mathcal{O}} \left(\frac{\log^2(K^2/\delta)}{C^2 D_{\theta^*}} \right).
	\end{split}
	\label{eqn:sampleCmpltyOfPELEG}
	\end{equation}
\end{theorem}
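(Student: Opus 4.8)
The plan is to split \eqref{eqn:sampleCmpltyOfPELEG} into a correctness claim (PELEG outputs $a^*$) and a sample-complexity bound on $\tau$, both established on one high-probability event. For correctness, I would introduce for each phase $m$ the event $\mathcal{E}_m$ that the OLS estimate $\hat{\theta}_m$ resolves every pairwise gap $(x-x')^T\theta^*$ with $x,x'\in\mX_m$ to within $\approx 2^{-(m+1)}$; the Key Lemma converts the Phase Stopping Criterion (line~\ref{PELEGStep:phaseStoppingCriterion}) into exactly this statement via a confidence-ellipsoid bound for the design $V^m_{N_m}$, holding with probability $\geq 1-\delta_m$. On $\bigcap_m\mathcal{E}_m$ the elimination rule never discards $x^*$ and removes every arm of true gap exceeding $2^{-m}$, so by induction $\mX_m\subset\mS_m$ and $x^*\in\mX_m$ for all $m$; a union bound with $\delta_m=\delta/m^2$ controls the total failure probability by $\sum_m\delta_m=O(\delta)$ (rescaling constants as needed). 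Given $\mX_m\subset\mS_m$, Corollary~\ref{corollary:finite number of phases} caps the number of phases at $M:=\lceil\log_2(1/\Delta_{\min})\rceil$, so it remains to bound $N_m$, the length of phase~$m$.

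The core of the argument is the per-phase bound, obtained from the no-regret game dynamics. Writing $U^{(\lambda)}_k=(\lambda^Tx_k)^2$ so that $\innprod{w,U^{(\lambda)}}=\norm{\lambda}^2_{W}$ with $W=\sum_k w_kx_kx_k^T$, the game has value $V^*_m=\max_{w}\min_{\lambda}\norm{\lambda}^2_{W}$ over the feasible set $\Lambda_m:=\big(\bigcup_{x\in\mX_m}\mathcal{C}_m(x)\big)\cap B(0,D_m)$. First, the tracking Lemma~\ref{lemma:tracking} gives $n^k_t\geq\sum_{s\leq t}w_s^k-(K-1)$, hence for every $\lambda\in\Lambda_m$,
\[
\norm{\lambda}^2_{V^m_t}=\sum_k n^k_t(\lambda^Tx_k)^2\geq\sum_{s\leq t}\norm{\lambda}^2_{W_s}-(K-1)\sum_k(\lambda^Tx_k)^2\geq\sum_{s\leq t}\norm{\lambda}^2_{W_s}-(K-1)KD_m^2,
\]
using $\norm{x_k}\leq1$ and $\norm{\lambda}\leq D_m$. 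Next, since $\lambda_s$ is the $MIN$ player's best response to $w_s$ (so $\innprod{w_s,U_s}=\min_\lambda\norm{\lambda}^2_{W_s}$) and the $MAX$ player runs EXP-WTS on losses $l^{MAX}_s(w)=-\innprod{w,U_s}$ of range $\leq D_m^2$, the Hedge regret bound gives $\sum_{s\leq t}\innprod{w_s,U_s}\geq\max_w\sum_{s\leq t}\innprod{w,U_s}-R_t\geq tV^*_m-R_t$ with $R_t=O(D_m^2\sqrt{t\log K})$, the last step using $\innprod{w^{\mathrm{opt}},U_s}=\norm{\lambda_s}^2_{W^{\mathrm{opt}}}\geq V^*_m$ for the game-optimal design $w^{\mathrm{opt}}$. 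Since the minimum of a sum dominates the sum of minima, $\min_{\lambda\in\Lambda_m}\sum_{s\leq t}\norm{\lambda}^2_{W_s}\geq\sum_{s\leq t}\min_\lambda\norm{\lambda}^2_{W_s}\geq tV^*_m-R_t$, and combining with the tracking step yields $\min_{\lambda\in\Lambda_m}\norm{\lambda}^2_{V^m_t}\geq tV^*_m-R_t-(K-1)KD_m^2$. The phase ends the first time this exceeds $8\log(K^2/\delta_m)$; solving $tV^*_m-cD_m^2\sqrt{t\log K}-(K-1)KD_m^2\geq 8\log(K^2/\delta_m)$ for $t$ gives $N_m=O\!\big(\tfrac{\log(K^2/\delta_m)}{V^*_m}+\tfrac{D_m^4\log K}{(V^*_m)^2}+\tfrac{K^2D_m^2}{V^*_m}\big)$.

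The final step turns $V^*_m$ into the instance constants. A one-pair quadratic program shows $V^*_m=\epsilon_m^2/\rho_m$ with $\rho_m:=\min_w\max_{x\neq x'\in\mX_m}\norm{x-x'}^2_{W^{-1}}$, the RAGE surrogate. The key geometric estimate is $4^m\rho_m\leq 4/D_{\theta^*}$: taking $w^\dagger$ attaining $D_{\theta^*}$, for $x,x'\in\mX_m\subset\mS_m$ the triangle inequality in $\norm{\cdot}_{(W^\dagger)^{-1}}$ gives $\norm{x-x'}_{(W^\dagger)^{-1}}\leq(\Delta_x+\Delta_{x'})/\sqrt{D_{\theta^*}}\leq 2^{1-m}/\sqrt{D_{\theta^*}}$, so $\rho_m\leq 4^{1-m}/D_{\theta^*}$. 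Plugging in the definitions of $\epsilon_m$ (line~\ref{PELEGstep:defnOfEpsilonM}) and $D_m$ shows $\epsilon_m^2\asymp C^2\,4^{-m}/(\log K\,\log(K^2/\delta_m))$ in the binding regime, whence $V^*_m\asymp C^2D_{\theta^*}/(\log K\,\log(K^2/\delta_m))$ and the dominant term of $N_m$ becomes $O\!\big(\tfrac{\log K(\log(K^2/\delta_m))^2}{C^2D_{\theta^*}}\big)$, with the $D_m^2\propto C$ and ball-radius terms contributing only lower order. Summing $\tau=\sum_{m=1}^{M}N_m$ and bounding $\log(K^2/\delta_m)\leq\log(M^2K^2/\delta)$ via $\delta_m\geq\delta/M^2$ reproduces the two-term bound in \eqref{eqn:sampleCmpltyOfPELEG}, with $M=\lceil\log_2(1/\Delta_{\min})\rceil$.

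I expect the main obstacle to be the per-phase growth bound of the second paragraph: cleanly converting the interleaved EXP-WTS/best-response regret guarantee, together with the tracking slack and the truncation $B(0,D_m)$, into the lower bound $\min_{\lambda\in\Lambda_m}\norm{\lambda}^2_{V^m_t}\geq tV^*_m-R_t-(K-1)KD_m^2$ — in particular, verifying that the best-response QP over the union of halfspaces is exactly solved and that the ball constraint does not distort the saddle value $V^*_m$, which is precisely why $D_m$ and the $\min\{1,\cdot\}$ truncation in $\epsilon_m$ are tuned as in line~\ref{PELEGstep:defnOfEpsilonM}. A secondary difficulty is making the geometric estimate $4^m\rho_m\leq 4/D_{\theta^*}$ and the $\epsilon_m\asymp C$ computation precise enough that the constants collapse to the stated $2048$ and $256$.
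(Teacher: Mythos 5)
Your proposal is correct and follows essentially the same route as the paper: the same good-event/union-bound correctness argument with $\delta_m=\delta/m^2$, the same phase-length bound obtained by chaining the tracking slack, the best-response property, and the EXP-WTS regret to lower-bound $\min_{\lambda}\norm{\lambda}^2_{V^m_t}$ by $t\,\epsilon_m^2/B_m - O(D_m^2\sqrt{t\log K}) - O(K^2D_m^2)$, and the same final summation via $\sum_m 4^m B_m^*\lesssim \log_2(1/\Delta_{\min})/D_{\theta^*}$ (which you re-derive by the triangle inequality in $\norm{\cdot}_{(W^\dagger)^{-1}}$ rather than citing it, and which the paper imports as Proposition~\ref{propsition:Jamieson bound on B_m}). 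The only substantive content you defer is the Key Lemma itself — whose delicate sub-case analysis of the ellipsoid breaching $B(0,D_m)$ is exactly where the $\min\{1,\cdot\}$ truncation of $\epsilon_m$ earns its keep — but since that is a separately stated lemma, invoking it as you do matches the paper's own proof of the theorem.
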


In Sec.~\ref{sec:sketchSampleCmplxtyAnalysis}, we sketch the arguments behind the result. The proof in its entirety can be found in Sec.~\ref{appendix:proofOfSampleCmpltyBound} in the Supplementary Material.

\begin{remark}
 As explained in Sec.~\ref{sec:algorithmicIngredients}, the optimal (oracle) allocation requires $\mathcal{O}\left(\frac{1}{D_{\theta^*}}\log{\frac{K}{\delta}}\right)$ samples. Comparing this with \eqref{eqn:sampleCmpltyOfPELEG}, we see that our algorithm is instance optimal up to logarithmic factors, barring the $\frac{1}{C^2}$ term, so the optimality holds whenever $C=\Omega(1)$. Recall that $C$ is the smallest eigenvalue of $\sum_{x\in\mX}xx^T$. $C=\Omega(1)$ is reasonable to expect given that in most applications, feature vectors (i.e., $x_1,\cdots,x_K$) are chosen to represent the feature space well which translates to a high value of $C$.
\end{remark}
\begin{remark} 
The main computational effort in Algorithm \ref{alg:PEPEG-mixed} is in checking the phase stopping criterion (line \ref{PELEGStep:phaseStoppingCriterion}) and implementing the best-response model learner (line \ref{PELEGStep:minPlayerStrategyLambdaT}), both of which are explicit quadratic programs. Note also that bounding the losses submitted to EXP-WTS to within $B(0,D_m)$ is required only for the regret analysis of EXP-WTS to go through. In practice, as the simulation results show, PELEG works without this and, in fact, permits efficient solution of Step~\ref{PELEGStep:minPlayerStrategyLambdaT} in the algorithm, further reducing computational complexity.
\end{remark}



\section{Sketch of Sample Complexity Analysis}\label{sec:sketchSampleCmplxtyAnalysis}
This section outlines the proof of the $\delta$-PAC sample complexity of Algorithm \ref{alg:PEPEG-mixed} (Theorem \ref{theorem:sampleComplexity}) and describes the main ideas and challenges involved in the analysis.


At a high level the proof of Theorem \ref{theorem:sampleComplexity} involves two main parts: (1) a correctness argument for the central {\bf while} loop that eliminates arms, and (2) a bound for its length, which, when added across all phases, gives the overall sample complexity bound. 


{\bf 1. Ensuring progress (arm elimination) in each phase.} At the heart of the analysis is the following result which guarantees that upon termination of the central while loop, the uncertainty in estimating all differences of means among the surviving (i.e., non-eliminated) arms remains bounded.

\begin{restatable}[Key Lemma]{lemma}{keylemma}
\label{lemma:key lemma on uncertainity after phase m}
After each phase $m\geq 1$,
$ \max\limits_{x,x'\in \mX_m, x\neq x'} \norm{x-x'}^2_{\left({V_{N_m}^m}\right)^{-1}} \leq \frac{\left(\left(\frac{1}{2}\right)^{m+1}\right)^2}{8\log K^2/\delta_m}. $
\end{restatable}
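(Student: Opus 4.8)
The plan is to read off the geometry encoded in the Phase Stopping Criterion (line~\ref{PELEGStep:phaseStoppingCriterion}) and combine it with the exact calibration of $\epsilon_m$ (line~\ref{PELEGstep:defnOfEpsilonM}). First I would rewrite the feasible set of the inner minimization. By definition, $\lambda\in\bigcup_{x\in\mX_m}\mathcal{C}_m(x)$ holds iff there is a pair $x,x'\in\mX_m$ with $x\neq x'$ and $\lambda^T(x'-x)\geq\epsilon_m$; thus the union is exactly the set of $\lambda$ with $\max_{x\neq x'}\lambda^T(x'-x)\geq\epsilon_m$, a union of finitely many halfspaces. Writing $V:=V^m_{N_m}$, the quantity tested in the stopping rule therefore decomposes as a minimum over pairs:
\[
\min_{\lambda\in\bigcup_x\mathcal{C}_m(x)\cap B(0,D_m)}\norm{\lambda}^2_{V}
=\min_{\substack{x,x'\in\mX_m\\ x\neq x'}}\ \min_{\substack{\lambda^T(x'-x)\geq\epsilon_m\\ \norm{\lambda}_2\leq D_m}}\norm{\lambda}^2_{V}.
\]

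For a single pair, and temporarily dropping the ball constraint, $\min\{\norm{\lambda}^2_V:\lambda^T(x'-x)\geq\epsilon_m\}$ is an equality-constrained quadratic program whose optimizer is $\lambda^\star=\frac{\epsilon_m}{\norm{x'-x}^2_{V^{-1}}}V^{-1}(x'-x)$ with optimal value $\frac{\epsilon_m^2}{\norm{x'-x}^2_{V^{-1}}}$. The crux is to show that for the pair attaining $\max_{x\neq x'}\norm{x'-x}^2_{V^{-1}}$ this $\lambda^\star$ already sits inside $B(0,D_m)$, so the ball constraint is inactive. Here I would use that $V=V^m_{N_m}\succeq\sum_k x_kx_k^T$ (the burn-in matrix only ever has PSD rank-one terms added), whence $\lambda_{\min}(V)\geq C$ and so $V^{-2}\preceq\frac1C V^{-1}$; this gives $\norm{\lambda^\star}_2^2\leq\frac{\epsilon_m^2}{C\,\norm{x'-x}^2_{V^{-1}}}$. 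The definition of $\epsilon_m$ is arranged precisely so that $\frac{\epsilon_m^2}{C D_m^2}\leq \frac{((1/2)^{m+1})^2}{8\log(K^2/\delta_m)}=:B_m$ (this needs checking the two branches of the $\min$ defining $\epsilon_m$: the branch $\epsilon_m=\frac{D_m\sqrt C}{\sqrt{8\log(K^2/\delta_m)}}(1/2)^{m+1}$ gives equality, while the branch $\epsilon_m=(1/2)^{m+1}$, which is active exactly when $CD_m^2\geq 8\log(K^2/\delta_m)$, gives the inequality). Consequently, whenever $\norm{x'-x}^2_{V^{-1}}>B_m$ we obtain $\norm{\lambda^\star}_2^2<D_m^2$, i.e. feasibility in the ball.

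The lemma then follows by a short contradiction. Suppose at termination $\max_{x\neq x'}\norm{x'-x}^2_{V^{-1}}>B_m$, and let $(x,x')$ attain the maximum. By the previous step its halfspace optimizer $\lambda^\star$ is feasible for the ball-constrained program, so
\[
\min_{\lambda\in\bigcup_x\mathcal{C}_m(x)\cap B(0,D_m)}\norm{\lambda}^2_{V}
\leq\norm{\lambda^\star}^2_V=\frac{\epsilon_m^2}{\norm{x'-x}^2_{V^{-1}}}
<\frac{\epsilon_m^2}{B_m}\leq 8\log(K^2/\delta_m),
\]
where the last inequality uses $\epsilon_m\leq(1/2)^{m+1}$, so $\epsilon_m^2/B_m\leq 8\log(K^2/\delta_m)$. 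But the while loop exits at $t=N_m$ precisely because its guard fails, i.e. the left-hand minimum is strictly greater than $8\log(K^2/\delta_m)$ — contradicting the display. Hence $\max_{x\neq x'}\norm{x'-x}^2_{V^{-1}}\leq B_m$, which is exactly the claim.

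The step I expect to be the main obstacle is the ball-constraint handling in the second paragraph: verifying that the calibration of $\epsilon_m$ makes $B(0,D_m)$ inactive at precisely the threshold $B_m$, which relies on the uniform lower bound $\lambda_{\min}(V^m_t)\geq C$ furnished by the burn-in and on carefully treating both branches of the $\min$ in $\epsilon_m$. The remainder is the elementary quadratic-program computation together with the rewriting of the union of halfspaces as a minimum over pairs.
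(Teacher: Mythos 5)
Your proof is correct, and for the crux of the lemma it takes a genuinely different and noticeably leaner route than the paper. The paper also reduces the stopping test to per-pair quadratic programs, but it then splits on which branch of the $\min$ defines $\epsilon_m$: in the branch $\epsilon_m=(1/2)^{m+1}$ it shows the whole ellipsoid $\mE(0,V_t^m,r_m)$ sits inside $B(0,D_m)$ (via Rayleigh's inequality and the burn-in bound $\lambda_{\min}(V_t^m)\geq C$), so the ball constraint is vacuous; in the other branch it introduces per-pair stopping times $t_{x,x'}$ and further splits on whether the constrained optimizer $\lambda_t^*$ lies in the interior of the ball or on its boundary, the boundary sub-case requiring an intricate geometric argument with auxiliary points $\theta_t^*,\theta_1,\lambda_1,\lambda_2,\theta_2$ and two separate claims. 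You avoid that boundary analysis entirely: under the contradiction hypothesis $\norm{x'-x}^2_{V^{-1}}>B_m$ you bound the \emph{unconstrained} halfspace optimizer directly via $V^{-2}\preceq\frac{1}{C}V^{-1}$ (a sharper use of the same burn-in fact), and the calibration $\frac{\epsilon_m^2}{CD_m^2}\leq B_m$ — which I verified holds with equality in the shrunken branch and with inequality in the other — places it strictly inside $B(0,D_m)$, so it is feasible and its value $\epsilon_m^2/\norm{x'-x}^2_{V^{-1}}<\epsilon_m^2/B_m\leq r_m^2$ contradicts the exit condition. This handles both branches of $\epsilon_m$ uniformly and replaces the paper's hardest sub-case with a two-line norm computation; what the paper's longer argument buys in exchange is an explicit description of where the constrained optimizer lives (useful for intuition and for Figure~\ref{fig:pfsketch12}), but nothing your argument needs. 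The only points worth making explicit in a final write-up are that $V_{N_m}^m\succ 0$ (so $V^{-1}$ exists, guaranteed by $C>0$ after the burn-in) and that the ordered-pair union in $\bigcup_{x\in\mX_m}\mC_m(x)$ is exactly the union of halfspaces $\{\lambda:\lambda^T(x'-x)\geq\epsilon_m\}$ you minimize over — both immediate.
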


{\bf Proof sketch.} Phase~$m$ ends at time $t$ when the ellipsoid $\mathcal{E}(0,V_t^m,r_m)$, with center $0$ and shape according to the arms played in the phase so far, becomes small enough to avoid intersecting the half spaces $\mathcal{C}_m(x)$, for all surviving arms $x$, within the ball $\cap B(0, D_m)$  (Step~\ref{PELEGStep:phaseStoppingCriterion} of the algorithm) which is required to keep  loss functions bounded for no-regret properties.  

Suppose, for the sake of simplicity, that only two arms $x_i$ and $x_j$ are present when phase $m$ starts. Figure \ref{fig:pfsketch1} depicts a possible situation when the phase ends. $\mathcal{C}_m(x_i) \equiv \mathcal{C}_m(x_i; \epsilon_m)$ and $\mathcal{C}_m(x_j; \epsilon_m)$ with $\epsilon_m \approx 2^{-m}$ are halfspaces, denoted in gray, that intersect the ball $B(0, D_m)$ in the areas colored red. In this situation, the ellipsoid $V_t^m$, shaded in blue, has just broken away from the red regions {\em in the interior of the ball}. Because its extent in the direction $x_i - x_j$ lies within the strip between the two hyperplanes bounding $\mathcal{C}_m(i), \mathcal{C}_m(j)$, it can be shown (see proof of lemma in appendix) that $\norm{x_i-x_j}_{(V_t^m)^{-1}}$ is small enough to not exceed roughly $2^{-m}$. 

The more challenging situation is when the ellipsoid $V_t^m$ breaks away from the red regions by {\em breaching the boundary of the ball} $B(0, D_m)$, as in the green ellipsoid in Figure \ref{fig:pfsketch2}. The {\bf while} loop terminating at this time would not satisfy the objective of controlling $\norm{x_i-x_j}_{(V_t^m)^{-1}}$ to within $2^{-m}$, since  the extent of the ellipsoid in the direction $x_i-x_j$ is larger than the gap between the halfspaces $\mathcal{C}_m(x_i)$ and $\mathcal{C}_m(x_j)$. A key idea we introduce here is to {\em shrink the hyperplane gap} (i.e., $\epsilon_m$) by a factor (precisely $D_m \sqrt{C} (8 \log K^2/\delta_m)^{-1}$) which is represented by the min operation in Step~\ref{PELEGstep:defnOfEpsilonM}. In doing this we bring the halfspaces closer, and then insist that the ellipsoid break away from these {\em new} halfspaces within the ball. This more stringent requirement guarantees that when the loop terminates, the extent of the final ellipsoid (shaded in blue) stays within the original, unshrunk, gap ensuring $\norm{x_i-x_j}_{(V_t^m)^{-1}} \lessapprox 2^{-m}$. 

\begin{figure}[ht]
\begin{subfigure}{.49\textwidth}
  \centering
  \resizebox{0.9\textwidth}{!}{
  \begin{tikzpicture}[>=Stealth]
    \fill[gray!20, shift={+(0,+3)}, rotate=45] (-5.5,0) rectangle (1.3,2);
    \fill[gray!20, shift={+(0,-3)}, rotate=45] (-1.3,0) rectangle (5.5,-2);
    
    \fill[blue!30, rotate=110, thick] (0,0) ellipse (2.2 and 1);
    
    \begin{scope}
      \clip[shift={+(0,3)}, rotate=45] (-5.5,0) rectangle (1,2);
      \fill[red!30] (0,0) circle (3);
    \end{scope}
    \begin{scope}
      \clip[shift={+(0,-3)}, rotate=45] (-2,0) rectangle (5,-2);
      \fill[red!30] (0,0) circle (3);
    \end{scope}
    
    \draw[color=red!60, thin](0,0) circle (3);

    \draw[black, shift={+(-1,+2)}, very thick] (-3.5,-3.5) -- (2.5,2.5);
    
    \draw[black, shift={+(+1,-2)}, very thick] (-2.5,-2.5) -- (3.5,3.5);
    
    \filldraw[black] (0,0) circle (2pt) node[anchor=east] {$0$};
    \draw[thin] (-2.5,2.5) -- (3.5,-3.5) node[anchor=west] { $\text{Span}(x_i - x_j)$} ; 
    
    \draw[very thick, ->] (3.5,0.5) -- (3.5+1,0.5-1) node[anchor=north ] {{\small $\quad \quad \{\lambda: \lambda^T(x_i-x_j) \geq \epsilon  \}$}} ;
    
    \draw[very thick, ->] (0.5,3.5) -- (0.5-1,3.5+1) node[anchor=east] {{\small $\{\lambda: \lambda^T(x_i-x_j) \leq -\epsilon  \}$}} ;
    
    \draw[thin, <->] (-4.2,-1.2) -- node[anchor=north east, align=center] {Ellipsoid fits within this\\$\Rightarrow$ $\norm{x_i-x_j}_{(V_t^m)^{-1}} \lessapprox 2^{-m}$ } (-4.2+3,-1.2-3)  ;
    
    \node[color=red, anchor=west] at (2.3,2.3) {{\small $B(0, D_m)$}};
    \node[color=blue, anchor=west] at (0.5,1) {{\small $\mathcal{E}(0, V_t^m, r_m)$}};
        
  \end{tikzpicture}
    }
  \caption{`Easy' case: The blue ellipsoid separates from the halfspaces intersecting the ball (red) by {\em staying within the ball}. In this case its extent along $(x_i-x_j)$ is within the gap between the hyperplanes (parallel black lines).}
  \label{fig:pfsketch1}
\end{subfigure}\hfill
\begin{subfigure}{.49\textwidth}
  \centering
  \resizebox{0.9\textwidth}{!}{
    \begin{tikzpicture}[>=Stealth]
    
    \fill[gray!20, shift={+(0,+3)}, rotate=45] (-5.5,0) rectangle (1.3,2);
    \fill[gray!20, shift={+(0,-3)}, rotate=45] (-1.3,0) rectangle (5.5,-2);
    
    \fill[green!30, rotate=75, thick] (0,0) ellipse (5.5 and 0.8);
    
    \fill[blue!30, rotate=60, thick] (0,0) ellipse (4.2 and 0.6);

    \begin{scope}
      \clip[shift={+(0,3)}, rotate=45] (-5.5,0) rectangle (1,2);
      \fill[red!30] (0,0) circle (3);
    \end{scope}
    \begin{scope}
      \clip[shift={+(0,-3)}, rotate=45] (-2,0) rectangle (5,-2);
      \fill[red!30] (0,0) circle (3);
    \end{scope}

    \draw[color=red!60, thin](0,0) circle (3);
    
    \draw[black, shift={+(-1,+2)}, very thick] (-3.5,-3.5) -- (2.5,2.5);
    \draw[black, shift={+(+1,-2)}, very thick] (-2.5,-2.5) -- (3.5,3.5);
    
    \draw[green, dashed, shift={+(-1,+3)}, thick] (-4,-4) -- (3,3);
    
    \draw[green, dashed, shift={+(+1,-3)}, thick] (-2,-2) -- (5.3,5.3);
    
    \draw[black, dashed, shift={+(-1,0.8)}, thick] (-3,-3) -- (3.5,3.5);
    
    \draw[black, dashed, shift={+(+1,-0.8)}, thick] (-3.5,-3.5) -- (3.3,3.3);

    
    \filldraw[black] (0,0) circle (2pt) node[anchor=east] {$0$};
    \draw[thin] (-2.5,2.5) -- (3.5,-3.5) node[anchor=west] { $\text{Span}(x_i - x_j)$} ; 
    
    \draw[very thick, ->] (3.5,0.5) -- (3.5+1,0.5-1) node[anchor=north ] {$\quad \quad \{\lambda: \lambda^T(x_i-x_j) \geq \epsilon  \}$} ;
    
    \draw[very thick, ->] (0.5,3.5) -- (0.5-1,3.5+1) node[anchor=east] {$\{\lambda: \lambda^T(x_i-x_j) \leq -\epsilon  \}$} ;
    
    
    \draw[thin, <->] (-4.2,-1.2) -- node[anchor=north east, align=center] {Ellipsoid fits within this\\$\Rightarrow$ $\norm{x_i-x_j}_{(V_t^m)^{-1}} \lessapprox 2^{-m}$ } (-4.2+3,-1.2-3)  ;
    
    \draw[thin, <->, green, shift={+(1.7,5.7)}] node[anchor=west, align=center] {Ellipsoid extent\\too large} (0,0) --  (4,-4)  ;
    
    \draw[thin, <->, blue, shift={+(2.4,4.2)}] (0,0) --  (1.8,-1.8) node[anchor=south west, align=center] {Ellipsoid extent\\sufficiently small} ;
    
    \end{tikzpicture}
    }

  \caption{{ `Difficult' case: The green ellipsoid separates from the halfspaces intersecting the the ball (red) by {\em  breaching the ball}. Its extent along $(x_i-x_j)$ {\em exceeds the gap} between the hyperplanes (parallel black lines). When forced to separate from a closer pair of halfspaces (dotted black lines), then the ellipsoid's (in blue) extent is within the original gap.}}
  \label{fig:pfsketch2}
\end{subfigure}
\caption{{\footnotesize The phase stopping condition in Algorithm \ref{alg:PEPEG-mixed} ensures $\norm{x_i-x_j}_{(V_t^m)^{-1}} \lessapprox 2^{-m}$ after phase $m$.
 }}
\label{fig:pfsketch12}
\end{figure}
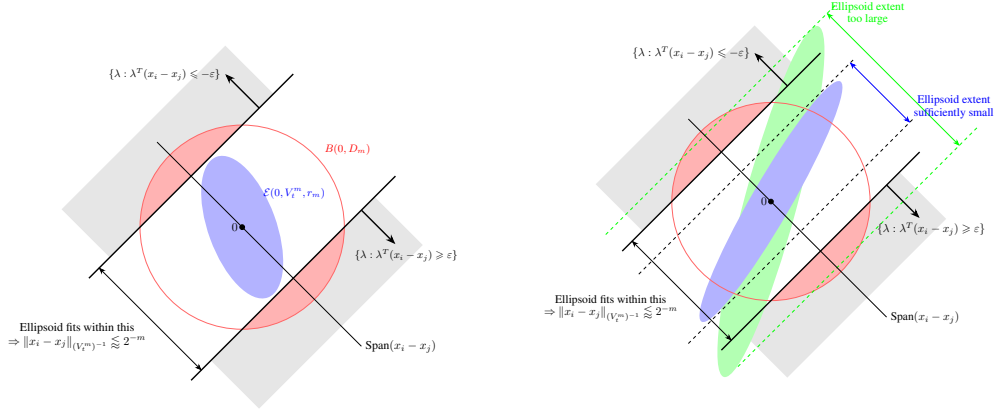

{\bf 2. Bounding the number of arm pulls in a phase.} 
The main bound on the length of the central {\bf while} loop is the following result.
\begin{restatable}[Phase length bound]{lemma}{phaselengthlemma}\label{lemma:phaseLengthBound}
Let $B_m:= \min\limits_{w\in \mathcal{P}_K}\max\limits_{x,x'\in \mX_m, x\neq x'} \norm{x-x'}^2_{W^{-1}}$.
There exists $\delta_0$ such that $\forall \delta<\delta_0$, the length $N_m$ of any phase $m$ is bounded as :
\begin{align*}
N_m \leq \begin{cases}
  2B_m\left(2^{m+1}\right)^2\left[\frac{r_m^4\log K}{(\sqrt{2}-1)^2C^2} \right]+1 &\text{  if } \epsilon_m = \frac{D_m\sqrt{C}}{r_m}\left(\frac{1}{2}\right)^{m+1},\\
 2B_m\left(2^{m+1}\right)^2r_m^2+1&\text{  if } \epsilon_m = \left(\frac{1}{2}\right)^{m+1}.
\end{cases}
\end{align*}

\end{restatable}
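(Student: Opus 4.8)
The plan is to turn the phase-stopping test (Step~\ref{PELEGStep:phaseStoppingCriterion}) into a lower bound on $\min_{\lambda}\norm{\lambda}^2_{V_t^m}$ that grows \emph{linearly} in the round count $t$, and then read off $N_m$ as the first time this lower bound exceeds the threshold, which I write as $r_m^2 := 8\log(K^2/\delta_m)$. Let $\mathcal{L}_m := \bigcup_{x\in\mX_m}\mathcal{C}_m(x)\cap B(0,D_m)$ denote the feasible set of confusing directions. Since $V_t^m=\sum_k n_t^k x_kx_k^T$, the Tracking Lemma (Lem.~\ref{lemma:tracking}) lets me replace the integer counts $n_t^k$ by the cumulative fractional mass $\sum_{s=1}^t w_s^k$ up to a per-arm deficit of at most $K-1$; combined with the burn-in term $\sum_k x_kx_k^T$, this gives, for every fixed $\lambda\in\mathcal{L}_m$, a bound $\norm{\lambda}^2_{V_t^m}\ge\sum_{s=1}^t\norm{\lambda}^2_{W_s}-\mathrm{err}_t$, where the correction is $O(KD_m^2)$ and, crucially, independent of $t$ (here I use $\norm{\lambda}\le D_m$ and $\norm{x_k}\le1$). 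Taking the minimum over $\lambda$ and pushing it inside the sum yields $\min_{\lambda\in\mathcal{L}_m}\norm{\lambda}^2_{V_t^m}\ge\sum_{s=1}^t\norm{\lambda_s}^2_{W_s}-O(KD_m^2)$, where $\lambda_s$ is exactly the MIN player's best response computed in Step~\ref{PELEGStep:minPlayerStrategyLambdaT}.

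\textbf{Per-round progress via the two learners.} Next I would lower-bound $\sum_s\norm{\lambda_s}^2_{W_s}$ by coupling the two subroutines. Noting that $\norm{\lambda_s}^2_{W_s}=w_s^TU_s=-l^{MAX}_{s}(w_s)$, the no-regret guarantee of EXP-WTS against the fixed comparator $w_m^*$ attaining $B_m$ gives $\sum_s w_s^TU_s\ge\sum_s (w_m^*)^TU_s-\mathcal{R}_t=\sum_s\norm{\lambda_s}^2_{W_m^*}-\mathcal{R}_t$, with $\mathcal{R}_t$ the Hedge regret for losses bounded in $[0,D_m^2]$. For the per-term estimate I would invoke the dual (Rayleigh-quotient) identity $\norm{y}^2_{W^{-1}}=\max_{\mu\neq0}(\mu^Ty)^2/\norm{\mu}^2_W$: since each $\lambda_s\in\mathcal{C}_m(x)$ for some pair with $\lambda_s^T(x'-x)\ge\epsilon_m$, this yields $\norm{\lambda_s}^2_{W_m^*}\ge\epsilon_m^2/\norm{x'-x}^2_{(W_m^*)^{-1}}\ge\epsilon_m^2/B_m$, the last step holding because $w_m^*$ attains the minimax value $B_m$ and hence every pairwise norm $\norm{x'-x}^2_{(W_m^*)^{-1}}$ is at most $B_m$. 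Summing over $s$ gives $\sum_s\norm{\lambda_s}^2_{W_s}\ge t\,\epsilon_m^2/B_m-\mathcal{R}_t$.

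\textbf{Closing the loop.} Combining the two estimates, while the while-loop is still running we have $r_m^2\ge\min_\lambda\norm{\lambda}^2_{V_t^m}\ge t\,\epsilon_m^2/B_m-\mathcal{R}_t-O(KD_m^2)$, so $N_m$ is at most the first $t$ at which the right-hand side exceeds $r_m^2$, i.e. $N_m\le B_m(r_m^2+\mathcal{R}_{N_m}+O(KD_m^2))/\epsilon_m^2+1$. The two cases in the statement then follow by substituting the two branches of the $\min$ defining $\epsilon_m$ (Step~\ref{PELEGstep:defnOfEpsilonM}): when $\epsilon_m=2^{-(m+1)}$ this reduces to $2B_m(2^{m+1})^2 r_m^2+1$, and when $\epsilon_m=\frac{D_m\sqrt C}{r_m}2^{-(m+1)}$, plugging in the value of $D_m$ and using $\max_{x,x'\in\mX_m}\norm{x-x'}_2^2\le4$ produces the $r_m^4\log K/((\sqrt2-1)^2C^2)$ form.

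\textbf{Main obstacle.} The delicate point is the final bookkeeping: making the bound linear in $B_m/\epsilon_m^2$ rather than quadratic. A naive regret term $\mathcal{R}_t=O(D_m^2\sqrt{t\log K})$, inserted into $t\,\epsilon_m^2/B_m\le r_m^2+\mathcal{R}_t+\mathrm{err}_t$, is self-referential in $t$ and threatens an additional $B_m^2D_m^4\log K/\epsilon_m^4$ contribution. The resolution is precisely the calibration of the algorithm's constants: the ball radius $D_m$ caps the loss range (and hence $\mathcal{R}_t$), while the $\epsilon_m$-shrinkage factor $D_m\sqrt C/r_m$ forces the per-round progress $\epsilon_m^2/B_m$ to dominate both the regret and the $O(KD_m^2)$ tracking/burn-in correction over the whole phase, so that the two lower-order terms are absorbed into the leading constant (the factor $2$) and the additive $+1$. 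Verifying that this domination holds uniformly in $t$, and hence that the displayed closed forms are valid, is where the real effort lies; the role of $C=\lambda_{\min}(\sum_k x_kx_k^T)$ is exactly to keep the tracking correction positive-semidefinitely controlled at this step.
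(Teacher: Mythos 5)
Your outline follows the paper's own proof essentially step for step: the same chain (the tracking lemma to pass from $V_t^m$ to $\sum_{s\leq t}W_s$ with an additive $K^2D_m^2$ correction, the best-response property of $\lambda_s$, the EXP-WTS regret bound for losses in $[0,D_m^2]$, and the per-round value $\epsilon_m^2/B_m$) leading to the same master inequality $t-\frac{B_mD_m^2\sqrt{\log K}}{(\sqrt{2}-1)\epsilon_m^2}\sqrt{t}\leq\frac{B_m}{\epsilon_m^2}\left(r_m^2+K^2D_m^2\right)$. Your comparator argument with the fixed $w_m^*$ and the Rayleigh/dual identity is only a cosmetic repackaging of the paper's $\max_w\min_q$ step; both deliver $\norm{\lambda_s}^2_{W_m^*}\geq\epsilon_m^2/B_m$ per round.

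The proposal is, however, incomplete exactly at the point you yourself flag as ``where the real effort lies,'' and this is a genuine gap rather than a detail. The inequality $t-a\sqrt{t}\leq c$ with $a=\frac{B_mD_m^2\sqrt{\log K}}{(\sqrt{2}-1)\epsilon_m^2}$ admits solutions up to $t\approx a^2+2c$, and $a^2$ carries a factor $B_m^2/\epsilon_m^4$ --- quadratic in $B_m$ and of a different shape from the claimed bound, as you correctly observe. Asserting that ``the calibration of the constants'' absorbs this into the leading factor $2$ and the additive $+1$ is precisely the content that must be proved, and it is where the paper spends the second half of its argument: linearizing $\sqrt{t}$ around $T_m$, establishing the lower bound $B_m\geq\max_{x,x'\in\mX_m}\norm{x-x'}_2^2$, and substituting the explicit value of $D_m$ to compare $D_m\sqrt{B_m(2^{m+1})^2\log K}$ against $2(\sqrt{2}-1)\sqrt{C}$; this comparison is delicate (the correction enters through a denominator whose sign must be controlled) and cannot be waved through. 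Relatedly, you never account for the hypothesis ``there exists $\delta_0$ such that for all $\delta<\delta_0$'': in the paper this restriction is exactly what guarantees $r_m^2\geq K^2D_m^2$, so that the tracking/burn-in correction is dominated by the $r_m^2$ term and $T_m$ is within a factor $2$ of its leading summand. Without carrying out this bookkeeping --- and verifying that the regret-correction factor is genuinely $O(1)$ rather than growing with $B_m(2^{m+1})^2$ --- the two displayed closed forms do not yet follow from the master inequality you derive.
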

To prove this we use the no-regret property of both the best-response $MIN$  and the EXP-WTS $MAX$ learner (the full proof appears in the appendix). A key novelty here is the introduction of the ball $B(0, D_m)$ as a technical device to control the $2$-norm radius of the final stopped ellipsoid $\mathcal{E}(0,V_t^m,r_m)$ (inequality $(i)$ in the proof) when used with the basic tracking rule over arms introduced by Degenne et al \citep{degenne-etal19non-asymptotic-exploration-solving-games}. 


\section{Experiments}\label{Experiments}
\vspace*{-0.2cm}
We numerically evaluate PELEG, against the algorithms $\mathcal{X}\mathcal{Y}$-static (\cite{soare}), LUCB (\cite{Kalyanakrishnan2012PACSS}), ALBA (\cite{Tao_et-al-ICML-2018}), LinGapE (\cite{XuAISTATS}) and RAGE (\cite{jamieson-etal19transductive-linear-bandits}), for 3 common benchmark settings. The oracle lower bound is also calculated. Note: In our implementation, we ignore the term $B(0,D_m)$ in the phase stopping criterion; this has the advantage of making the criterion check-able in closed form. We simulate independent, $\mathcal{N}(0,1)$ observation noise in each round. All results reported are averaged over 50 trials. We also empirically observe a {$100\%$ success rate} in identifying the best arm, although a confidence value of $\delta=0.1$ is passed in all cases.

{\bf Setting 1: Standard bandit.} The arm set is the standard basis $\left\{e_1,e_2,\ldots,e_5\right\}$ in 5 dimensions. The unknown parameter $\theta^*$ is set to $\left(\Delta,0,\ldots,0 \right)$, where $\Delta>0$, with $\Delta$ swept across $\left\{0.1,0.2,0.3,0.4,0.5\right\}$. As noted in \cite{XuAISTATS}, for $\Delta$ close to $0$, $\mathcal{X}\mathcal{Y}$-static's essentially uniform allocation is optimal, since we have to estimate all directions equally accurately. However, PELEG performs better (Fig. \ref{fig:Experiments}(a)) due to being able to  eliminate suboptimal arms earlier instead of uniformly across all arms. Fig. \ref{fig:Experiments}(b) compares PELEG and RAGE in the smaller window $\Delta\in \left[0.11,0.19\right]$, where PELEG is found to be competitive (and often better than) RAGE. 

{\bf Setting 2: Unit sphere.} The arms set comprises of $100$ vectors sampled uniformly from the surface of the unit sphere $\mathbb{S}^{d-1}$. We pick the two closest arms, say  $u$ and $v$, and then set $\theta^*=u+\gamma(v-u)$ for $\gamma=0.01$, making $u$ the best arm. We simulate all algorithms over dimensions $d=10,20,\ldots,50$. This setting was first introduced in \cite{Tao_et-al-ICML-2018}, and PELEG is uniformly competitive with the other algorithms (Fig. \ref{fig:Experiments}(c)).

{\bf Setting 3: Standard bandit with a confounding arm \cite{soare}.}  We instantiate $d$ canonical basis arms $\{e_1,e_2,\ldots,e_d\}$ and an additional arm $x_{d+1}=(\cos(\omega),\sin(\omega),0,\ldots,0)$, $d=2,\dots,10$, with $\theta^*=e_1$ so that the first arm is the best arm. By setting $0< \omega << 1$, the $d+1$th arm becomes the closest competitor. Here, the performance critically depends on how much an agent focuses on comparing arm $1$ and arm $d+1$. 
LinGapE performs very well in this setting, and PELEG and RAGE are competitive with it (Fig. \ref{fig:Experiments}(d)). \\
\vspace*{-0.6cm}

\begin{table}[h]
	\hspace*{-0.5cm}
	\parbox{0.21\linewidth}{
		
		\includegraphics[scale=0.2]{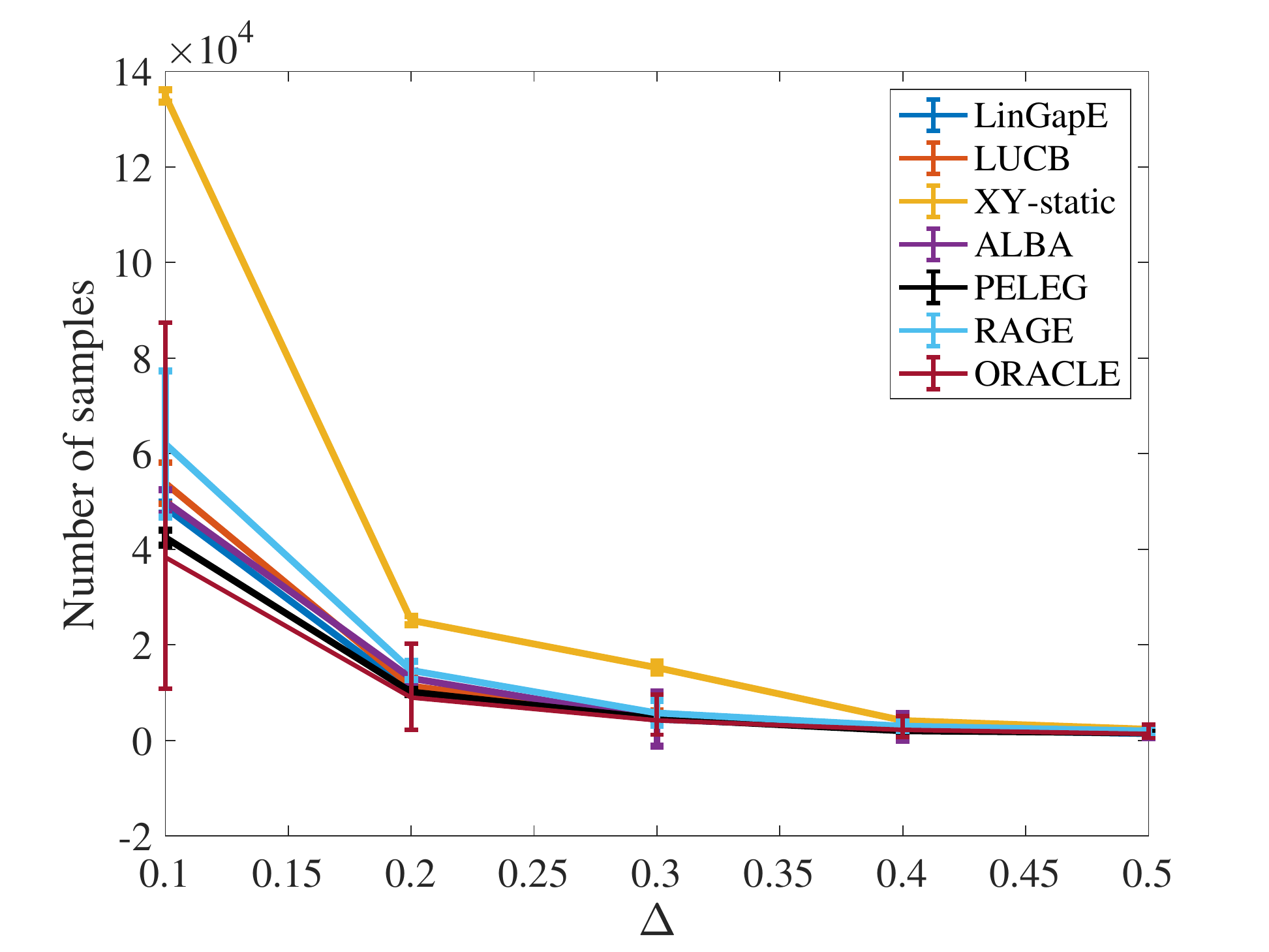}
		\label{fig:dataset3error-eps-converted-to}
	}
	\quad 
	\hspace*{0.2cm}
	\parbox{0.21\linewidth}{
		\includegraphics[scale=0.2]{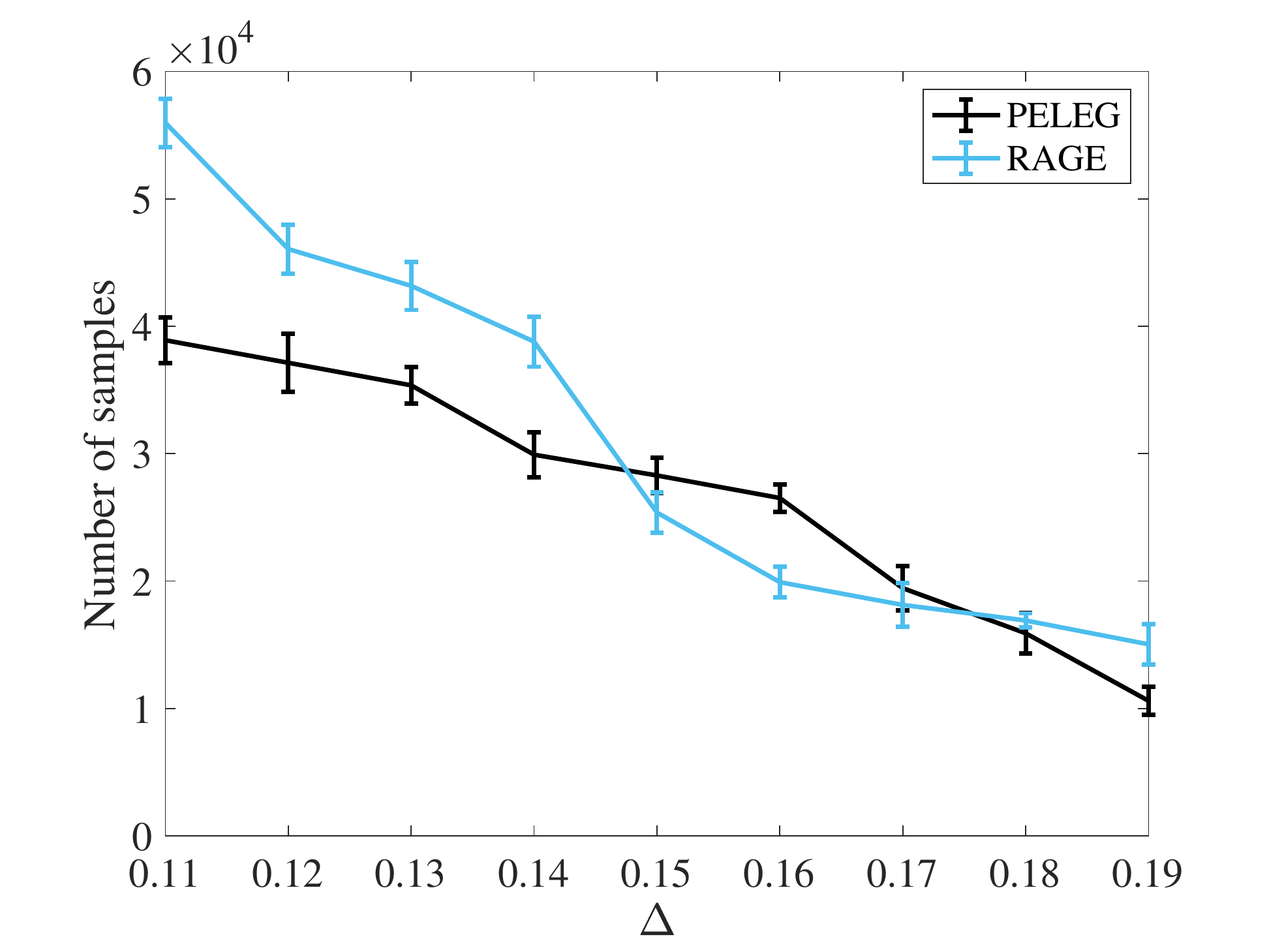}
		\label{fig:Dataset aistats extra}
	}
	\quad
	\hspace*{0.2cm}
	\parbox{0.21\linewidth}{
		\includegraphics[scale=0.2]{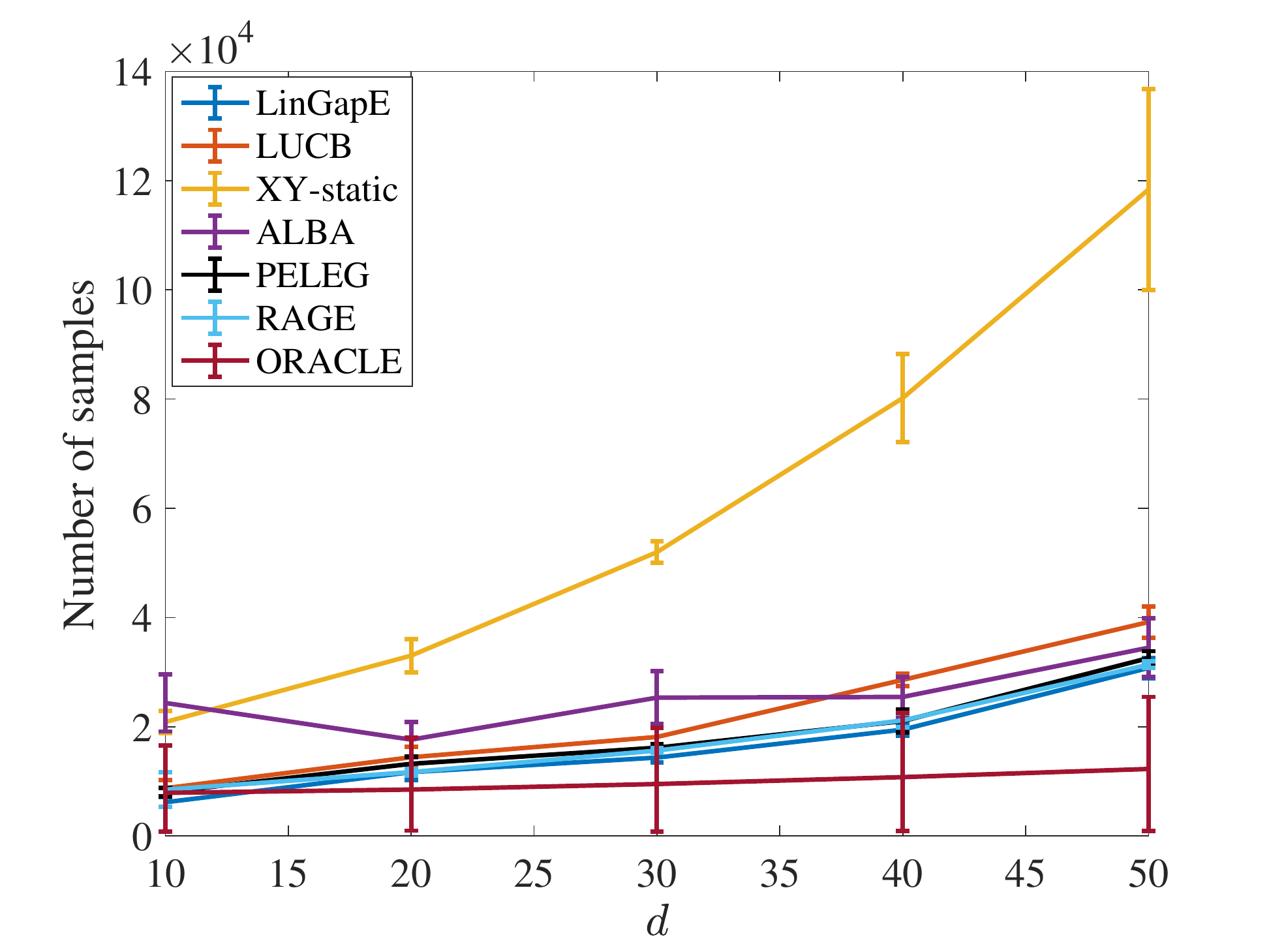}
		\label{fig:Dataset uniform}
	}
	\quad 
	\hspace*{0.2cm}
	\parbox{0.21\linewidth}{
		
		\includegraphics[scale=0.2]{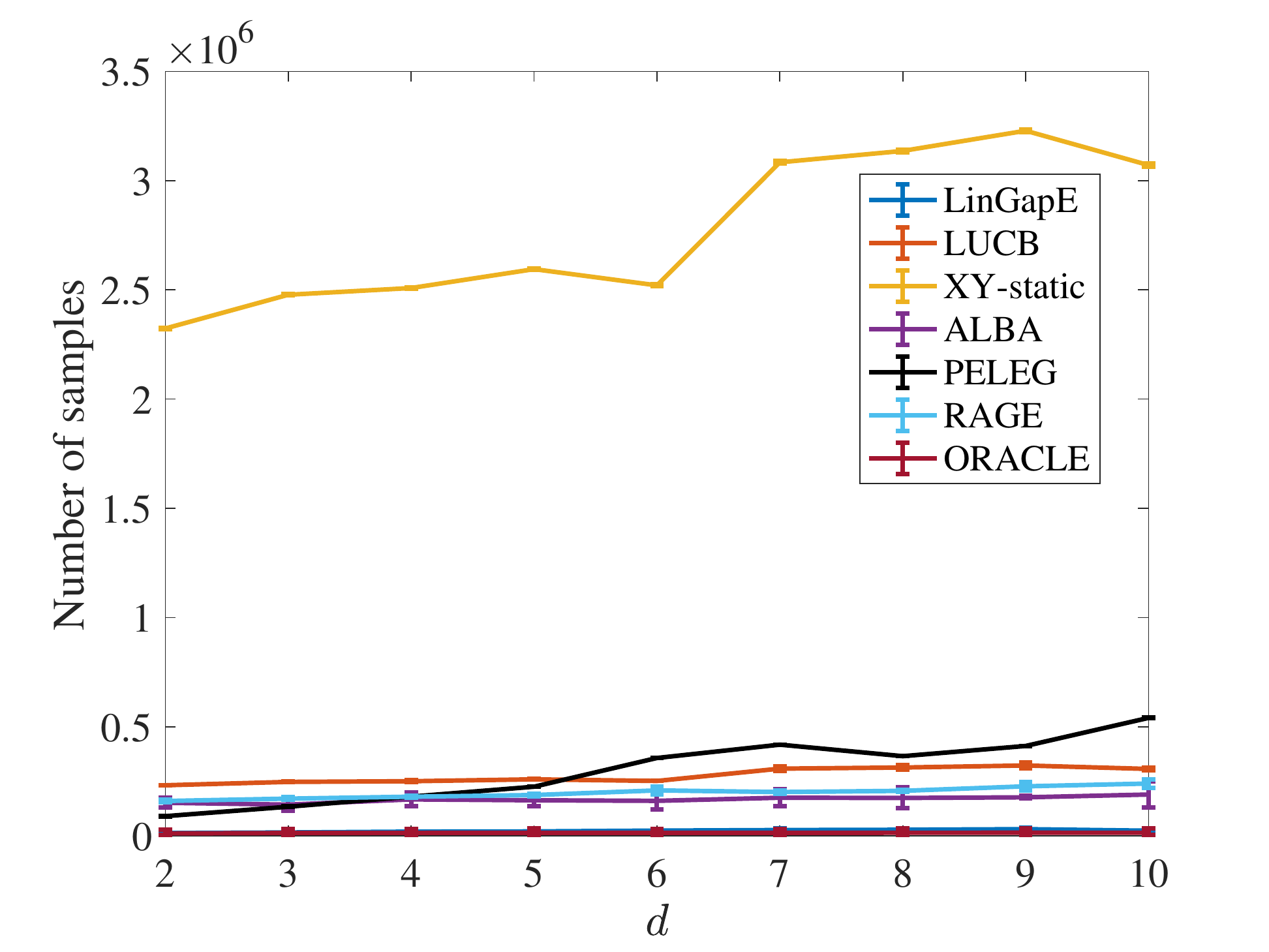}
		\label{fig:icml2errorbar-eps-converted-to}
		
	}
	\vspace*{-0.2cm}
	\captionof{figure} {Sample complexity performance of linear bandit best arm identification algorithms for $3$ different settings: Standard bandit (Figs. a, b), Unit Sphere (Fig. c) and Standard bandit with confounding arm (Fig. d).} 
	\label{fig:Experiments}
	
	%
	%
	
\end{table}

\section{Concluding Remarks  }
We have proposed a new, explicitly described algorithm for best arm identification in linear bandits, using tools from game theory and no-regret learning to solve minimax games. Several interesting directions remain unexplored. Removing the less-than-ideal dependence on the feature $C$ of the arm geometry and the extra logarithmic dependence on $\log(1/\delta)$ are perhaps the most interesting technical questions. It is also of great interest to see if a more direct game-theoretic strategy, along the lines of \cite{degenne-etal19non-asymptotic-exploration-solving-games}, exists for structured bandit problems, as also whether one can extend this machinery to solve for best policies in more general Markov Decision Processes.


\textbf{Broader Impact.} This work is largely theoretical in its objective. However, the problem that it attempts to lay sound theoretical foundations for is a widely encountered search problem based on features in machine learning. As a result, we anticipate that its implications may carry over to domains that involve continuous, feature-based learning, such as attribute-based recommendation systems, adaptive sensing and robotics applications. Proper care must be taken in such cases to ensure that recommendations or decisions from the algorithms set out in this work do not transgress considerations of safety and bias. While we do not address such concerns explicitly in this work, they are important in the design and operation of automated systems that continually interact with human users.

%
%
%


\bibliography{ALS}

\newpage
\appendix
\section{Glossary of symbols}\label{sec:glossaryOfSymbols}
\begin{enumerate}
    \item $\mathcal{A}^{MAX}_m:$ the EXP-WTS algorithm, used to compute the mixed strategy of the $MAX$ player in each round of PELEG.
    \item $a^*:$ the index of the best arm, i.e., $a^*:=\argmax_{i\in[K]}x_i^T\theta^*$.
    \item $B(0,D_m):$ the \emph{closed} ball of radius $D_m$ in $\reals^d$, centered at $0$.
    \item $C = \lambda_{\min}\left(\sum_{x\in\mX}xx^T\right)$.
    \item $\mathcal{C}_m(x):= \left\{\lambda\in \Real^d: \exists x'\in \mX_m, x'\neq x| \lambda^Tx'\geq\lambda^Tx+\epsilon_m \right\}$ is the union of all hyperplanes $\lbrace \lambda\in\reals^d| \lambda^T(x'-x)\geq\epsilon_m \rbrace.$
    \item $D_m:=2(\sqrt{2}-1)\sqrt{\frac{{C}}{{\max\limits_{x,x'\in \mX_m, x\neq x'}\norm{x-x'}_2^2\log K}}}$.
    \item $d:$ dimension of space in which the feature vectors $x_1,\cdots,x_K$ reside.
    \item $\Delta_i = (x^*-x_i)^T\theta^*,~i\neq a^*.$
    \item $\Delta_{\min} = \min_{i\neq a^*}\Delta_i.$
    \item $\delta:$ maximum allowable probability of erroneous arm selection (a.k.a confidence parameter).
    \item $\delta_m = \frac{\delta}{m^2}.$
    \item $\mathcal{E}(0,V,r):=\lbrace \lambda\in\reals^d\mid \lambda^TV\lambda\leq r^2 \rbrace$, is the confidence ellipsoid with center $0$, shaped by $V$ and $r.$
    \item $\mathcal{H}(x,x'):$
    \item $K = |\mX|$ number of feature vectors.
    \item $N_m:$ the length of Phase~$m.$
    \item $\nu_k:$ rewards from Arm~$k$ are all drawn IID from $\nu_k.$
    \item $\mP(\Omega):=\lbrace p\in[0,1]^{|\Omega|}:\parallel p\parallel_1=1 \rbrace$, the set of all probability measures on some given set $\Omega.$
    \item $r_m = \sqrt{8\log{\frac{K^2}{\delta_m}}}.$
    \item $\theta^*:$ fixed but unknown vector in $\reals^d$ that parameterizes the means of $\nu_k,$ i.e., the mean of $\nu_k$ is $x_k^T\theta^*.$
    \item $n_t^k:$ number of times Arm~$k$ has been sampled up to Round~$t$ of PELEG.
    \item $\hat{\theta}_m:$ OLS estimate of $\theta^*$ at the end of Phase~$m$ of PELEG.
    \item $V_t^m = \sum_{s\leq t}x_sx_s^T$ the design matrix in Round~$t$ of Phase~$m.$
    \item $W_t = \sum_{x\in\mX}w_xxx^T$  the design matrix formed by sampling arms$\sim w\in\mP(\mX).$
    \item $\mathcal{X}=\{x_1,\cdots,x_K\}$, the feature set.
    \item $\mathcal{X}_m$ the set of features that survive Phase~$m$ of PELEG.
\end{enumerate}

\section{Technical lemmas}\label{appendix:technical lemmas}

\subsection{Tracking lemma}
The $\mathcal{A}^{MAX}$ subroutine recommends a distribution $w_t$ in every round $t$
 over the set of arms. In order to play an arm from this distribution we use a \enquote{tracking} rule, which helps the number of arm pulls to stay close to the cumulative sum $\sum\limits_{s=1}^{t}w_s^k$ for each arm $k\in [K]$.

 \begin{lemma}[$n^k_t$ tracks $\sum_{s=1}^tw^k_s$]\label{lemma:tracking}
In any phase $m\geq 1$, if for $\forall t\geq K$, the following strategy for pulling the arms is used.
\[ \text{Choose arm, } k_t=\argmin\limits_{k\in [K]} \frac{n_{t-1}^k}{\sum\limits_{s=1}^{t}w_s^k},\]
then, for all $t\geq K$ and for all $k\in [K]$, $ \sum\limits_{s=1}^{t}w_s^k-\left(K-1\right)\leq n_t^k\leq \sum\limits_{s=1}^{t}w_s^k +1 $. 
\end{lemma}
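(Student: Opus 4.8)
The plan is to establish the upper bound $n_t^k \le S_t^k + 1$ (writing $S_t^k := \sum_{s=1}^t w_s^k$) by induction on $t$, and then read off the lower bound for free from a conservation-of-mass argument. The single active ingredient is a one-line consequence of the minimality built into the tracking rule; everything else is bookkeeping.

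First I would fix conventions so the statement is meaningful at the start of the phase. During the burn-in the algorithm plays arm $s$ in round $s$ for $s=1,\dots,K$, which I model by setting $w_s = \hat{e}_s$, so that $S_K^k = 1 = n_K^k$ for every $k$. Two conservation identities then hold for every $t\ge K$: since each $w_s$ is a probability vector, $\sum_k S_t^k = \sum_{s=1}^t \sum_k w_s^k = t$; and since exactly one arm is pulled per round after the burn-in, $\sum_k n_t^k = t$ as well. Note also $S_t^k \ge w_k^k = 1 > 0$ for $t \ge K$, so the ratios defining the tracking rule are well defined.

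The key step — which is also the only real obstacle, albeit a mild one — is the claim: if arm $k$ is pulled in round $t$ (i.e. $k = k_t$), then $n_{t-1}^{k} \le S_t^{k}$. I would argue by contradiction. If $n_{t-1}^{k_t} > S_t^{k_t}$, i.e. the minimal ratio satisfies $n_{t-1}^{k_t}/S_t^{k_t} > 1$, then by minimality of $k_t$ every ratio exceeds $1$, so $n_{t-1}^j > S_t^j$ for all $j$; summing over $j$ and invoking the two conservation identities gives $t-1 = \sum_j n_{t-1}^j > \sum_j S_t^j = t$, a contradiction. Hence $n_t^{k_t} = n_{t-1}^{k_t} + 1 \le S_t^{k_t} + 1$. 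Notice it is precisely the index mismatch (cumulative weight up to $t$ against the pull count up to $t-1$) that supplies the strict inequality $t-1 < t$ powering the contradiction.

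With this in hand the upper bound follows by induction. The base case $t=K$ reads $n_K^k = 1 \le 2$. For the inductive step (at $t \ge K+1$), the pulled arm $k_t$ is handled by the key step, while any un-pulled arm $j \ne k_t$ satisfies $n_t^j = n_{t-1}^j \le S_{t-1}^j + 1 \le S_t^j + 1$ by the induction hypothesis and the monotonicity of $s\mapsto S_s^j$. Finally the lower bound is purely algebraic: using $\sum_k n_t^k = \sum_k S_t^k = t$ together with the upper bound on the remaining $K-1$ coordinates,
\[
n_t^k = \sum_j S_t^j - \sum_{j\ne k} n_t^j \;\ge\; \sum_j S_t^j - \sum_{j\ne k}\left(S_t^j + 1\right) = S_t^k - (K-1),
\]
which completes the argument.
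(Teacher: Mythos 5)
Your proof is correct and follows essentially the same route as the paper's: an induction on $t$ for the upper bound whose key step compares the minimal ratio $n_{t-1}^{k_t}/\sum_{s\le t}w_s^{k_t}$ to the ratio of sums $(t-1)/t\le 1$ (you derive this by contradiction, the paper by directly citing the mediant inequality $\min_i a_i/b_i \le \sum_i a_i/\sum_i b_i$ — the same fact), followed by the identical conservation argument for the lower bound. Your explicit handling of the burn-in convention ($S_K^k=1=n_K^k$) makes the base case cleaner, but the substance is the same.
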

\begin{proof}\label{proof:tracking}
We first show the upper-bound. We need to show that the inequality holds for all arms. First, let $k\neq k_t$. We will use induction on $t$. 
\newline {Base case:} At $t=K$, $n_t^j=1=\sum\limits_{s=1}^{t}w_s^j, \forall j\in [K]$.
\newline Let, the induction hold for all $s < t$. We will show that the inequality holds for $t$. If $k\neq k_t$, then $n_t^k=n_{t-1}^k\stackrel{(*)}{\leq} \sum\limits_{s=1}^{t-1}w_s^k +1 \leq \sum\limits_{s=1}^{t}w_s^k +1$.
\newline Next, let $k=k_t$. We note that by definition of $k_t$, we have 
\[\frac{n_{t-1}^{k_t}}{\sum\limits_{s=1}^{t}w_s^{k_t}}\stackrel{(*)}{\leq} \frac{\sum\limits_{j=1}^{K}n_{t-1}^{j}}{\sum\limits_{j=1}^{K}\sum\limits_{s=1}^{t}w_s^{j}}=\frac{t-1}{t} \leq 1. \]
Here, the inequality (*) follows because of the following fact: 
{for any sequence of positive numbers $\{a_i\}_{1\leq i\leq n}$ and $\{b_i\}_{1\leq i\leq n}$}, $\min\limits_{1\leq i\leq n} \frac{a_i}{b_i} \leq \frac{\sum_{i=1}^{n}a_i}{\sum_{i=1}^{n}b_i}. $
Consequently, $\frac{n_{t}^{k_t}}{\sum\limits_{s=1}^{t}w_s^{k_t}}= \frac{n_{t-1}^{k_t}+1}{\sum\limits_{s=1}^{t}w_s^{k_t}} \leq 1+ \frac{1}{\sum\limits_{s=1}^{t}w_s^{k_t}}$. Rearranging completes the proof of the right hand side.
\newline For the lower bound inequality, we observe that for any $k\in [K]$,
\[n_t^k =t-\sum\limits_{j\neq k}n_t^j \stackrel{(*)}{\geq} t-\sum\limits_{j\neq k}\sum\limits_{s=1}^{t}w_s^j-\left(K-1\right) = t-\sum\limits_{j=1}^{K}\sum\limits_{s=1}^{t}w_s^j +\sum\limits_{s=1}^{t}w_s^k-\left(K-1\right)= \sum\limits_{s=1}^{t}w_s^k-\left(K-1\right).\]
Here, the inequality $(*)$ follows from the the upper-bound on $n_t^k,~k\in [K]$.
\end{proof}

\subsubsection{Details of $\mathcal{A}^{MAX}_m$ (EXP-WTS)}
We employ the EXP-WTS algorithm to recommend to the MAX player, the arm to be played in round $t> K$. At the start of every phase $m\geq 1$, an EXP-WTS subroutine is instantiated afresh, with initial weight vectors to be $1$ for each of the $K$ experts. The $K$ experts are taken to be standard unit vectors $(0,0,\ldots,0,1,0,\ldots,0)$ with $1$ at the $k^{th}$ position, $k\in [K]$. The EXP-WTS subroutine recommends an exponentially-weighted probability distribution over the number of arms, depending upon the weights on each expert. The loss function supplied to update the weights of each expert, is indicated in Step~\ref{PELEGstep:passLoss2ExpWts} of Algorithm \ref{alg:PEPEG-mixed}.

 EXP-WTS requires a bound on the losses (rewards) in order to set its learning parameter optimally. This is ensured by passing an upper-bound of $D_m^2$ ($\because$ in any Phase~$m,\norm{\lambda}_2\leq D_m,$ see Step~\ref{PELEGStep:phaseStoppingCriterion} of Algorithm~\ref{alg:PEPEG-mixed}).

\begin{lemma}\label{lemma:regret of MAX player}
In any phase $m$, at any round $t>K$, $\mathcal{A}^{MAX}_m$ has a regret bounded as 
\[R_t \leq \frac{D_m^2}{\sqrt{2}-1}\sqrt{t\log K}.\] 
\end{lemma}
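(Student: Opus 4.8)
The plan is to recognize this as a direct instantiation of the classical regret guarantee for the exponential weights (Hedge) forecaster over $K$ experts, the experts being the pure strategies $\hat{e}_1,\dots,\hat{e}_K$ and the per-round reward vector being $U_t=\left((\lambda_t^Tx_1)^2,\dots,(\lambda_t^Tx_K)^2\right)$. First I would fix the object to be bounded: since the loss supplied in Step~\ref{PELEGstep:passLoss2ExpWts} is $l_s^{MAX}(w)=-w^TU_s$, the quantity $R_t$ is the regret of $\mathcal{A}_m^{MAX}$ against the best single arm in hindsight, i.e. $R_t=\max\limits_{k\in[K]}\sum_s U_s^k-\sum_s w_s^TU_s$, where the sum runs over the (at most $t$) rounds in which EXP-WTS has actually recommended a distribution $w_s$ during Phase~$m$.

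The first substantive step is to control the range of the rewards $U_s^k$. Because $\lambda_t$ is selected in Step~\ref{PELEGStep:minPlayerStrategyLambdaT} from within $B(0,D_m)$, we have $\norm{\lambda_t}_2\leq D_m$, and by the standing assumption $\norm{x_k}_2\leq 1$; Cauchy--Schwarz then gives $0\leq U_t^k=(\lambda_t^Tx_k)^2\leq\norm{\lambda_t}_2^2\norm{x_k}_2^2\leq D_m^2$ for every $k\in[K]$ and every round $t$. Hence every coordinate of every reward vector lies in an interval of length $D_m^2$. This is precisely the role of the ball constraint: it makes the losses fed to EXP-WTS uniformly bounded, which is the only hypothesis the Hedge regret bound requires (as the second Note following Theorem~\ref{theorem:sampleComplexity} remarks, the ball is a purely technical device for this analysis).

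With the loss range in hand, I would invoke the standard exponential-weights regret theorem from \cite{cesa-bianchi06prediction-learning-games}. For a forecaster over $K$ experts with rewards contained in an interval of length $G$, run with the appropriate learning rate (possibly time-varying, since the phase length $N_m$ is not known in advance), the cumulative regret after $t$ rounds is bounded by a universal constant times $G\sqrt{t\log K}$. Substituting $G=D_m^2$ yields the claim. Concretely, the optimally tuned bound gives $R_t\leq D_m^2\sqrt{(t\log K)/2}$, and since $\sqrt{1/2}<1/(\sqrt2-1)$ the stated inequality follows at once; alternatively, if EXP-WTS is run with the anytime schedule $\eta_s\propto\sqrt{(\log K)/s}$, the same telescoping potential argument (using $\sum_{s\leq t}\eta_s=O(\sqrt t)$) reproduces the constant $1/(\sqrt2-1)$ directly.

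I do not expect a serious obstacle here, as this is essentially a textbook result; the two points requiring care are (i) pinning down the loss-scaling factor $D_m^2$, which rests entirely on the restriction of $\lambda_t$ to $B(0,D_m)$, and (ii) matching the precise constant $1/(\sqrt2-1)$, which is either a consequence of the specific learning-rate schedule used or, more simply, follows by loosening the optimal $\sqrt{1/2}$ constant. The particular constant is deliberately chosen: since $D_m^2=4(\sqrt2-1)^2\,C/\!\left(\max\limits_{x,x'\in\mX_m,x\neq x'}\norm{x-x'}_2^2\log K\right)$, the factor $1/(\sqrt2-1)$ cancels one power of $(\sqrt2-1)$, keeping the downstream phase-length computation in Lemma~\ref{lemma:phaseLengthBound} clean.
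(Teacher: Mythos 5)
Your proposal is correct and takes essentially the same route as the paper, whose proof is a one-line appeal to the standard EXP-WTS regret analysis with losses scaled to $[0,D_m^2]$ (exactly the bound $0\leq(\lambda_t^Tx_k)^2\leq D_m^2$ you derive from $\lambda_t\in B(0,D_m)$ and $\norm{x_k}_2\leq1$). The only detail worth noting is that the paper fixes the unknown phase length via the doubling trick, which applied to the horizon-tuned bound $D_m^2\sqrt{(t\log K)/2}$ yields precisely the constant $\frac{1}{\sqrt{2}-1}$ --- one of the two anytime fixes you anticipate.
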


\begin{proof}
	The proof involves a simple modification of the proof of the regret analysis of the EXP-WTS algorithm (see for example, \cite{cesa-bianchi06prediction-learning-games}), with loss scaled by $[0, D_m^2]$ followed by the well-known \emph{doubling trick.}
\end{proof}

\section{Proof of Key Lemma}\label{appendix:Proof of key lemma}


\keylemma*

\begin{proof}
Let $r_m:=\sqrt{8\log K^2/\delta_m}$, for ease of notation.
The phase stopping criterion is 
\begin{equation}\label{eq:phase stopping criterion}
\text{STOP at round $ t \geq K $ if: }\min\limits_{\lambda\in \bigcup\limits_{x\in \mX_m}\mC_m(x)\cap B(0,D_m)} \norm{\lambda}^2_{\left({V_{N_m}^m}\right)} >r_m^2. 
\end{equation}

Note that the set $\mC_m(x)$ depends on the value that $\epsilon_m$ takes in phase $m$. Depending on the value of $\epsilon_m$, we divide the analysis into the following two cases. 
\subsection*{Case 1. $\epsilon_m=\left(1/2\right)^{m+1} $.}
In this case $\frac{D_m\sqrt{C}}{r_m} \geq 1 $. For any phase $m\geq 1$, and  $t\geq 1$, let us define the ellipsoid $\mE\left(0, V_t^m, r_m\right)  := \left\{\theta\in \Real^d:\norm{\theta}_{{V_t^m}}^2\leq r_m^2  \right\}$.
The phase stopping rule at round $t\geq K$ is equivalent to :
\begin{align*}
\quad \quad\text{STOP if}~&:& \mE(0, V_t^m, r_m) \bigcap \left\{\bigcup\limits_{x\in \mX_m}\mC_m(x)\cap B(0,D_m) \right\} = \emptyset \text{ (empty set)}\\
&\Leftrightarrow&   \left\{\mE(0, V_t^m, r_m)\cap B(0,D_m) \right\} \bigcap \left\{\bigcup\limits_{x\in \mX_m}\mC_m(x) \right\} = \emptyset.
\end{align*}
However by Rayleigh' inequality\footnote{for any PSD matrix $A$ and $x\in \Real^d$, $\lambda_{min}(A)\leq \frac{x^TAx}{x^Tx}\leq \lambda_{max}(A)$} followed by the fact that $\frac{D_m\sqrt{C}}{r_m} \geq 1 $, we have for any $\theta \in \mE(0, V_t^m,r_m) $, 

\[\norm{\theta}_2^2\leq \frac{\norm{\theta}_{V_t^m}^2}{\lambda_{min}(V_t^m)}\stackrel{(*)}{\leq} \frac{\norm{\theta}_{V_t^m}^2}{\lambda_{min}(\sum\limits_{k=1}^{K}x_kx_k^T )}\leq \frac{r_m^2}{C}\leq D_m^2.\]

The inequality (*) follows from the following fact: for $t\geq K$, $V_t^m=\sum\limits_{k=1}^{K}x_kx_k^T+ \sum\limits_{s=K+1}^{t}x_sx_s^T \succcurlyeq \sum\limits_{k=1}^{K}x_kx_k^T$. 

$\therefore \mE(0,V_t^m,r_m)\subseteq B(0,D_m), \forall t\geq K$. Hence the phase stopping rule reduces to,
\begin{align*}
\text{ STOP if: } \mE(0, V_t^m, r_m) \bigcap  \left\{\bigcup\limits_{x\in \mX_m}\mC_m(x) \right\} =\emptyset& \Leftrightarrow \min\limits_{\lambda \in \cup_{x\in \mX_m}\mC_m(x) } \norm{\lambda}_{V_t^m}^2 >r_m^2\\ &\Leftrightarrow \min\limits_{\lambda \in \bigcup\limits_{(x,x')\in \mX_m^2}\{\lambda': {\lambda'}^Tx'\geq {\lambda'}^Tx+\left(1/2\right)^{m+1}\} } \norm{\lambda}_{V_t^m}^2 >r_m^2.
\end{align*}
The above reduction is a minimization problem over union of halfspaces. For any fixed pair $\left(x,x'\right)\in \mX_m^2, x\neq x'$, this is a quadratic optimization problem with linear constraints, which can be explicitly solved using standard Lagrange method. 
\begin{lemma}[Supporting Lemma for Lem.~\ref{lemma:key lemma on uncertainity after phase m}]\label{lemma:equivalence of quadratic opts}
	For any two arms $x$ and $x'$, we have that
	\begin{align*}
	\min\limits_{\lambda \in \{\lambda': {\lambda'}^Tx'\geq {\lambda'}^Tx+\left(\frac{1}{2}\right)^{m+1}\} } \norm{\lambda}_{V_t^m}^2 =  \frac{\left(\left(\frac{1}{2}\right)^{m+1}\right)^2}{\norm{x-x'}^2_{(V_t^m)^{-1}}}.
	\end{align*}
\end{lemma}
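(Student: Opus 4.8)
The plan is to recognize this as a strictly convex quadratic program with a single affine inequality constraint and to solve it in closed form. Write $V := V_t^m \succ 0$, $c := x' - x$, and $\epsilon := (1/2)^{m+1} > 0$, so the problem reads $\min\{\lambda^T V \lambda : \lambda^T c \geq \epsilon\}$. First I would observe that since $V$ is positive definite (it dominates $\sum_k x_k x_k^T$ after the burn-in, exactly as in the Case~1 argument above), the objective is strictly convex with unique unconstrained minimizer $\lambda = 0$; but $0$ violates the constraint because $\epsilon > 0$. Hence at the optimum the constraint must be \emph{active}, i.e.\ $\lambda^T c = \epsilon$, and the problem is equivalent to the equality-constrained program $\min\{\lambda^T V \lambda : \lambda^T c = \epsilon\}$.

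Two equivalent routes then finish the job. The clean route is Cauchy--Schwarz in the inner product induced by $V$: for any feasible $\lambda$,
\[
\epsilon \;\leq\; \lambda^T c \;=\; \langle V^{1/2}\lambda,\, V^{-1/2} c\rangle \;\leq\; \norm{\lambda}_{V}\,\norm{c}_{V^{-1}},
\]
which rearranges to $\norm{\lambda}_V^2 \geq \epsilon^2 / \norm{c}_{V^{-1}}^2$, giving the desired lower bound on the objective. Equality in Cauchy--Schwarz holds exactly when $V^{1/2}\lambda$ is parallel to $V^{-1/2} c$, i.e.\ $\lambda \propto V^{-1} c$; choosing the scaling so the constraint is met produces the explicit minimizer $\lambda^\star = \tfrac{\epsilon}{c^T V^{-1} c}\, V^{-1} c$, which attains the bound. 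Substituting $c = x' - x$ and noting $\norm{x'-x}_{V^{-1}} = \norm{x - x'}_{V^{-1}}$ yields the claimed value $\epsilon^2/\norm{x-x'}^2_{(V_t^m)^{-1}}$.

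Alternatively, following the paper's hint at the ``standard Lagrange method,'' I would form $L(\lambda,\mu) = \lambda^T V \lambda - \mu(\lambda^T c - \epsilon)$, solve stationarity $2V\lambda = \mu c$ to get $\lambda = \tfrac{\mu}{2} V^{-1} c$, and fix $\mu$ from the active constraint $\lambda^T c = \epsilon$ as $\mu = 2\epsilon/(c^T V^{-1} c)$; substituting back gives the optimal value $\epsilon^2/(c^T V^{-1} c)$, matching the Cauchy--Schwarz computation.

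There is no genuine obstacle here: the only point needing care is justifying that the inequality constraint is active — so that the problem really reduces to its equality-constrained version rather than remaining slack — and this is immediate from $\epsilon > 0$ together with the fact that the unconstrained optimum sits at the origin. Everything else is a one-line convex-duality (or Cauchy--Schwarz) calculation.
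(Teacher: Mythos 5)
Your proof is correct and follows essentially the same route the paper intends: the paper's own proof is a one-line appeal to ``the standard Lagrange multiplier method,'' and your Lagrangian computation (together with the observation that the inequality constraint must be active since $\lambda=0$ is infeasible for $\epsilon>0$) is exactly the omitted calculation, with the Cauchy--Schwarz argument in the $V$-inner product being an equivalent reformulation of the same step.
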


\begin{proof}
	The result follows by solving the optimization problem explicitly using the Lagrange multiplier method.
\end{proof}

By using the above lemma  we obtain:
\[\text{STOP if:} \forall x,x'\in \mX_m, x\neq x', \norm{x-x'}^2_{(V_t^m)^{-1}}< \frac{\left(\left(\frac{1}{2}\right)^{m+1}\right)^2}{8\log K^2/\delta_m}. \]
Hence, at round $t=N_m$ we have, $\forall x,x'\in \mX_m, x\neq x', \norm{x-x'}^2_{(V_{N_m}^m)^{-1}}< \frac{\left(\left(\frac{1}{2}\right)^{m+1}\right)^2}{8\log K^2/\delta_m}$.

\subsection*{Case 2. $\epsilon_m=\frac{D_m\sqrt{C}}{r_m}\left(\frac{1}{2}\right)^{m+1}  $.}
In this case, we have $\frac{D_m\sqrt{C}}{r_m} < 1$.
\newline The phase ends when $\forall (x,x')\in \mX_m^2$, $\min\limits_{\substack{{\lambda\in \left\{\lambda\in \Real^d:\lambda^Tx'\geq \lambda^Tx+\epsilon_m \right\}}{\cap B(0,D_m)}}}\norm{\lambda}^2_{V_t^m}>r_m^2$. 
Let us decompose the optimization problem defining the phase stopping criteria into smaller sub-problems, depending on  pair of arms $(x,x')$ in $\mX_m^2$. That is, we split the set $\cup_{x\in \mX_m}\mC_m(x)$ in equation (\ref{eq:phase stopping criterion}), and  consider the following problem: for any pair of distinct arms $(x,x')\in \mX_m$, consider 
\[P(x,x'): \min\limits_{\substack{{\lambda\in \left\{\lambda\in \Real^d:\lambda^Tx'\geq \lambda^Tx+\epsilon_m \right\}}{\cap B(0,D_m)}}}\norm{\lambda}^2_{V_t^m}. \]
let $t_{x,x'}$ be the first round when $\min\limits_{\substack{{\lambda\in \left\{\lambda\in \Real^d:\lambda^Tx'\geq \lambda^Tx+\epsilon_m \right\}}{\cap B(0,D_m)}}}\norm{\lambda}^2_{V_t^m}>r_m^2$.
Clearly, we have $N_m=\max\limits_{(x,x')\in \mX_m^2, x\neq x'} t_{x,x'}$.
In addition, for any $t\geq t_{x,x'}$, $\norm{\lambda}_{V_t^m}^2=\lambda^T\left(V^m_{t_{x,x'}} +\sum_{s=t_{x,x'}+1}^{t}x_sx_s^t\right)\lambda=\norm{\lambda}^2_{V^m_{t_{x,x'}}}+ \sum_{s=t_{x,x'}+1}^{t}(x_s^T\lambda)^2\geq \norm{\lambda}^2_{V^m_{t_{x,x'}}} >r_m^2$. Hence, once the inequality for a given pair of arms $(x,x')$ is fulfilled it is satisfied for all subsequent rounds.  We will now analyze the problem $P(x,x')$ for each pair of arms $(x,x')\in \mX_m^2$ individually.
\par
For any $t\geq 1$ , define $\lambda_{t}^*\in \argmin\limits_{\substack{{\lambda\in \left\{\lambda\in \Real^d:\lambda^Tx'\geq \lambda^Tx+\epsilon_m \right\}}\\{\cap B(0,D_m)}}}\norm{\lambda}^2_{V_t^m}$. Note that $\lambda_t^*$ is specific to the pair  $(x,x')$. 
\subsubsection*{Claim 1. ${\lambda_{t}^*}^T(x'-x)=\epsilon_m, \forall t\geq 1$.}
\begin{proof}[Proof of Claim 1]
For the proof, let's denote $\lambda^*\equiv \lambda_{t}^*$. Now, suppose that the claim was not true, i.e., ${\lambda^*}^T(x'-x)=\epsilon_m+a$ for some $a>0$. Let $b= \frac{a}{{\lambda^*}^T(x'-x)}$. Then $0<b<1$. Define $\lambda':= (1-b)\lambda^*$. By construction, ${\lambda'}^T(x'-x)=\epsilon_m$, and $\norm{\lambda'}_2=(1-b)\norm{\lambda^*}_2<\norm{\lambda^*}_2$. Hence $ \lambda' \in { \left\{\lambda\in \Real^d:\lambda^Tx'\geq \lambda^Tx+\epsilon_m \right\}}{\cap B(0,D_m)} $. However, $\norm{\lambda'}_{V_t^m}=(1-b)\norm{\lambda^*}_{V_t^m}<\norm{\lambda^*}_{V_t^m}$, which is a contradiction. 
\end{proof}

At $t=t_{x,x'},$ we have $\min\limits_{\substack{{\lambda\in \left\{\lambda\in \Real^d:\lambda^Tx'\geq \lambda^Tx+\epsilon_m \right\}}\\{\cap B(0,D_m)}}}\norm{\lambda}^2_{V_t^m}>r_m^2$. We have two sub-cases depending on the 2-norm of $\lambda_t^*$.
\subsubsection*{Sub-case 1. $ \norm{\lambda_t^*}_2<D_m $.}
In this case, we have the following equivalence:
\[ \min\limits_{\substack{{\lambda\in \left\{\lambda\in \Real^d:\lambda^Tx'\geq \lambda^Tx+\epsilon_m \right\}}\\{\cap B(0,D_m)}}}\norm{\lambda}^2_{V_t^m} \equiv \min\limits_{\substack{{\lambda\in \left\{\lambda\in \Real^d:\lambda^Tx'\geq \lambda^Tx+\epsilon_m \right\}}}}\norm{\lambda}^2_{V_t^m}.  \]
This can be seen by noting that if $\norm{\lambda_t^*}_2<D_m$, then the corresponding Lagrange multiplier is zero. Hence at round $t=t_{x,x'}$, by solving a standard Lagrange optimization problem, we get $\norm{x-x'}^2_{(V_t^m)^{-1}}< \frac{\epsilon_m^2}{8\log K^2/\delta_m} = \frac{D_m^2C}{r_m^2} \frac{\left(\frac{1}{2}\right)^{2(m+1)}}{8\log K^2/\delta_m}< \frac{\left(\frac{1}{2}\right)^{2(m+1)}}{8\log K^2/\delta_m}$. The last inequality follows from the hypothesis of Case 2. Since $N_m\geq t_{x,x'}$, we get $\norm{x-x'}^2_{(V_{N_m}^m)^{-1}}\leq \norm{x-x'}^2_{\left(V^m_{t_{x,x'}}\right)^{-1}}< \frac{\left(\left(\frac{1}{2}\right)^{m+1}\right)^2}{8\log K^2/\delta_m}$. 
\subsubsection*{Sub-case 2. $ \norm{\lambda_t^*}_2=D_m $.}
The sub-case when $\norm{\lambda_t^*}_2=D_m$, is more involved. Let's enumerate the properties of $\lambda_t^*$  at $t=t_{x,x'}$ that we have.
\begin{itemize}
\item $\norm{\lambda_t^*}_{V_t^m}^2>r_m^2.$
\item $\norm{\lambda_t^*}_2=D_m.$
\item ${\lambda_{t}^*}^T(x-x')=\epsilon_m. $ 

\end{itemize} 
 We divide the analysis of this sub-case into two further sub-cases. 
\subsubsection*{Sub-sub-case 1. $ r_m^2 \norm{x-x'}^2_{(V_t^m)^{-1}}>\epsilon_m^2 $.}

Let $\theta_t^*:=\argmax\limits_{\theta\in \mE(0,V_t^m,r_m)} \theta^T(x'-x)$. Then, one can verify by solving the maximization problem explicitly that ${\theta_t^*}^T(x'-x)=r_m\norm{x'-x}_{(V_t^m)^{-1}}$.  Let $\theta_1:= \frac{{\theta_t^*}^T(x'-x)}{\norm{x'-x}_2^2}(x'-x)$. We have the following properties of $\theta_1$ by construction, which are straight-forward to verify.
\begin{itemize}
\item $\norm{\theta_1}_2 = \frac{r_m\norm{x'-x}_{(V^m_t)^{-1}}}{\norm{x'-x}_2}.$
\item $\theta_1^T(\theta_t^*-\theta_1)=0. $
\end{itemize}
Let $\lambda_1:=\frac{{\lambda_t^*}^T(x'-x)}{\norm{x'-x}_2^2}(x'-x)$. It follows that, $\norm{\lambda_1}_2=\frac{\abs{{\lambda_t^*}^T(x'-x)}}{\norm{x'-x}_2}=\frac{\epsilon_m}{\norm{x'-x}_2}$.
\newline Finally, let us define two more quantities. Let $\lambda_2:=\frac{r_m\norm{x'-x}_{(V_t^m)^{-1}}}{\epsilon_m}\lambda_t^*$ and $\theta_2:=\frac{\epsilon_m}{r_m\norm{x'-x}_{(V^m_t)^{-1}}}\theta_t^*$.
We have by the hypothesis of sub-sub-case 1, that $\norm{\theta_2}_2^2<\norm{\theta_t^*}_2^2$. This implies that $\theta_2\in \mE(0,V_t^m,r_m)$.

Next, we make the following two claims on the 2-norms of $\theta_2$ and $\theta_t^*-\theta_1$.
\subsubsection*{Claim. $ \norm{\theta_2}_2>D_m$.}
\begin{proof}[Proof of Claim.]
Suppose that $\theta_2\in B(0,D_m)$. By construction, $\theta_2^T(x'-x)=\epsilon_m$. Hence, $\theta_2\in \left\{\lambda\in \Real^d:\lambda^Tx'\geq \lambda^Tx+\epsilon_m \right\}\cap B(0,D_m). $
Since, $\theta_2 \in \mE(0,V_t^m,r_m)$, this implies that $\norm{\theta_2}_{V_t^m}\leq r_m$. However, this is a contradiction since at round $t$, $\min\limits_{\substack{{\lambda\in\left\{\lambda^Tx'\geq \lambda^Tx+\epsilon_m \right\} }\\{\cap B(0,D_m)}}}>r_m^2$. 
\end{proof}
Hence, we have the following,
\[D_m^2<\norm{\theta_2}_2^2=\frac{\epsilon_m^2}{r_m^2\norm{x'-x}^2_{(V_t^m)^{-1}}}\norm{\theta_t^*}_2^2= \frac{D_m^2}{\norm{\lambda_2}_2^2}\norm{\theta_t^*}_2^2\Rightarrow  \norm{\theta_t^*}_2^2 >\norm{\lambda_2}_2^2.  \]

\subsubsection*{Claim. $ \norm{\theta_t^*-\theta_1}_2^2 > \norm{\lambda_2-\theta_1}_2^2 $.}
\begin{proof}[Proof of Claim.]
First we note that,
\begin{align*}
\theta_1^T(\theta_t^*-\lambda_2) &= \frac{{\theta_t^*}^T(x'-x)}{\norm{x'-x}_2^2}(x'-x)^T\left(\theta_t^*- \frac{r_m\norm{x'-x}_{(V^m_t)^{-1}}}{\epsilon_m}\lambda_t^*\right)\\
&= \frac{r_m^2 \norm{x'-x}^2_{(V_t^m)^{-1}}}{\norm{x'-x}_2^2} - \frac{r_m^2 \norm{x'-x}^2_{(V_t^m)^{-1}}}{\norm{x'-x}_2^2} = 0.
\end{align*}
Next observe that,
\begin{align*}
\norm{\theta_t^*-\theta_1}_2^2 &= \norm{\theta_t^*}_2^2 + \norm{\theta_1}_2^2 -2{\theta_t^*}^T\theta_1\\
&= \norm{\theta_t^*}_2^2 + \norm{\theta_1}_2^2 -2({\theta_t^*-\lambda_2})^T\theta_1 -2\theta_1^T\lambda_2\\
&=\norm{\theta_t^*}_2^2 + \norm{\theta_1}_2^2  -2\theta_1^T\lambda_2\\
&> \norm{\lambda_2}_2^2 + \norm{\theta_1}_2^2  -2\theta_1^T\lambda_2 = \norm{\lambda_2-\theta_1}_2^2.
\end{align*}
\end{proof}
Putting things together we have,
\begin{align*}
\norm{\theta_t^*}_2^2 &= \norm{\theta_t^*-\theta_1}_2^2 + \norm{\theta_1}_2^2\\
\Rightarrow  \norm{\theta_1}_2^2 &= \norm{\theta_t^*}_2^2 - \norm{\theta_t^*-\theta_1}_2^2 \\
\Rightarrow  \norm{\theta_1}_2^2 &< \norm{\theta_t^*}_2^2 - \norm{\lambda_2-\theta_1}_2^2 \\
\Rightarrow  \frac{r_m^2\norm{x'-x}^2_{(V_t^m)^{-1}}}{\norm{x'-x}_2^2} &< \frac{r_m^2}{C} - r_m^2\norm{x'-x}^2_{(V_t^m)^{-1}} \left(\frac{D_m^2}{\epsilon_m^2}-\frac{1}{\norm{x'-x}_2^2} \right)\\
\Rightarrow  \frac{\norm{x'-x}^2_{(V_t^m)^{-1}}}{\norm{x'-x}_2^2} &< \frac{1}{C} - \norm{x'-x}^2_{(V_t^m)^{-1}} \left(\frac{D_m^2}{\epsilon_m^2}-\frac{1}{\norm{x'-x}_2^2} \right)\\
\Rightarrow  \frac{\norm{x'-x}^2_{(V_t^m)^{-1}}}{\norm{x'-x}_2^2} &< \frac{1}{C} - \norm{x'-x}^2_{(V_t^m)^{-1}} \frac{D_m^2}{\epsilon_m^2}+\frac{\norm{x'-x}^2_{(V_t^m)^{-1}}}{\norm{x'-x}_2^2}\\
\Rightarrow  \norm{x'-x}^2_{(V_t^m)^{-1}} & < \frac{\epsilon_m^2}{D_m^2C}= \frac{D_m^2C}{r_m^2D_m^2C} \left(\frac{1}{2}\right)^{2(m+1)}=\frac{\left(\left(\frac{1}{2}\right)^{m+1}\right)^2}{8\log K^2/\delta_m}.
\end{align*}

\subsubsection*{Sub-sub-case 2. $ r_m^2 \norm{x-x'}^2_{(V_t^m)^{-1}}\leq\epsilon_m^2 $. }
This case is trivial as by the hypothesis, 
\[\norm{x-x'}^2_{(V_t^m)^{-1}}\leq \frac{\epsilon_m^2}{r_m^2}= \frac{D_m^2C}{r_m^2}\frac{1}{r_m^2}\left(\left(\frac{1}{2}\right)^{m+1}\right)^2< \frac{\left(\left(\frac{1}{2}\right)^{m+1}\right)^2}{8\log K^2/\delta_m}.  \]
This completes the proof of the key lemma.

\end{proof}

\section{Proofs of bounds on phase length}\label{appendix:Proofs of bounds on phase length}
In this section we will provide an upper-bound on the length of any phase $m\geq 1$. Clearly, the length of any phase $m$ is governed by the value of $\epsilon_m$ in that phase. Towards this, we have the following lemma. 

\phaselengthlemma*


\begin{proof}
Recall that $r_m = \sqrt{8\log{K^2}/\delta_m}.$
Let $t$ be the last round in phase $m$, \textit{before} the phase ends. Then by definition of phase stopping rule (Step~12 of the algorithm),
\begin{align*}
r_m^2 &\geq \min\limits_{\lambda\in \bigcup\limits_{x\in \mX_m}\mC_m(x)\cap B(0,D_m)} \norm{\lambda}^2_{{V_t^m}}\\
&\stackrel{(i)}\geq \min\limits_{\lambda\in \bigcup\limits_{x\in \mX_m}\mC_m(x)\cap B(0,D_m)} \sum\limits_{s=1}^{t}\norm{\lambda}^2_{{W_s}}-K^2D_m^2\\
&\stackrel{(ii)}\geq \sum\limits_{s=1}^{t}\norm{\lambda_s}^2_{{W_s}}-K^2D_m^2\\
&\stackrel{(iii)}= \sum\limits_{s=1}^{t}\sum\limits_{k=1}^{K}w_s^k\left(\lambda_s^Tx_k\right)^2-K^2D_m^2\\
&\stackrel{(iv)}\geq \max\limits_{w\in \mathcal{P}_K} \sum\limits_{s=1}^{t}\sum\limits_{k=1}^{K}w^k\left(\lambda_s^Tx_k\right)^2-\frac{D_m^2}{\sqrt{2}-1}\sqrt{t\log K} - K^2D_m^2\\
&= \max\limits_{w\in \mathcal{P}_K} \sum\limits_{s=1}^{t}\norm{\lambda_s}^2_{W}-\frac{D_m^2}{\sqrt{2}-1}\sqrt{t\log K} - K^2D_m^2\\
&= t.\max\limits_{w\in \mathcal{P}_K} \sum\limits_{s=1}^{t}\frac{1}{t}\norm{\lambda_s}^2_{W}-\frac{D_m^2}{\sqrt{2}-1}\sqrt{t\log K} - K^2D_m^2\\
&\stackrel{(v)}\geq t.\max\limits_{w\in \mathcal{P}_K}\min\limits_{q\in \mathcal{P}\left(\bigcup\limits_{x\in \mX_m}\mC_m(x)\cap B(0,D_m) \right)} \mathbb{E}_{\lambda\sim q}\left[\norm{\lambda}^2_{W} \right]- \frac{D_m^2}{\sqrt{2}-1}\sqrt{t\log K} - K^2D_m^2\\
&\stackrel{(vi)}\geq t.\max\limits_{w\in \mathcal{P}_K}\min\limits_{q\in \mathcal{P}\left(\bigcup\limits_{x\in \mX_m}\mC_m(x) \right)} \mathbb{E}_{\lambda\sim q}\left[\norm{\lambda}^2_{W} \right]- \frac{D_m^2}{\sqrt{2}-1}\sqrt{t\log K} - K^2D_m^2\\
&\stackrel{(vii)}= t\frac{\epsilon_m^2}{B_m}- \frac{D_m^2}{\sqrt{2}-1}\sqrt{t\log K} - K^2D_m^2.
\end{align*}
Here the inequalities follow because of (i) lemma \ref{lemma:tracking}, (ii) best-response of MIN player as given in Step~15 of the algorithm, (iii) by definition of $W_s$ in Step~14, (iv) regret property of MAX player (see lemma \ref{lemma:regret of MAX player}), (v) $\sum\limits_{s=1}^{t}\frac{1}{t}\mathbbm{1}\{\lambda=\lambda_s \} \in \mathcal{P}\left(\bigcup\limits_{x\in \mX_m}\mC_m(x)\cap B(0,D_m) \right)$, (vi) taking minimum over a larger set, and (vii) follows by explicitly solving the minimization problem and recalling the definition of $B_m$
We have that,
\begin{equation}\label{eq: phase length general equation}
t-\frac{B_m}{(\sqrt{2}-1)\epsilon_m^2}D_m^2\sqrt{\log K}\sqrt{t} \leq \frac{B_m}{\epsilon_m^2}r_m^2 + \frac{B_m}{\epsilon_m^2}K^2D_m^2. 
\end{equation}
We will do the analysis depending on the value that $\epsilon_m$ takes in phase $m$.
\subsection*{Case 1. $\epsilon_m = \frac{D_m\sqrt{C}}{r_m}\left(\frac{1}{2}\right)^{m+1}$.}
In this case we have, $\frac{D_m\sqrt{C}}{r_m} <1$. Applying the value of $\epsilon_m$ in eq. (\ref{eq: phase length general equation}), we have
\begin{align}
t-\frac{B_m}{(\sqrt{2}-1)\epsilon_m^2}D_m^2\sqrt{\log K}\sqrt{t} &\leq \frac{B_m}{\epsilon_m^2}r_m^2 + \frac{B_m}{\epsilon_m^2}K^2D_m^2\nonumber\\
\Rightarrow   t-\frac{B_m}{(\sqrt{2}-1)C}r_m^2\left(2^{m+1}\right)^2\sqrt{\log K}\sqrt{t} &\leq \frac{B_m}{D_m^2C}r_m^4\left(2^{m+1}\right)^2 + \frac{B_m}{C}r_m^2\left(2^{m+1}\right)^2K^2.\label{eqn:StarDelta}
\end{align}
Let $T_m:=\frac{B_m}{D_m^2C}r_m^4\left(2^{m+1}\right)^2 + \frac{B_m}{C}r_m^2\left(2^{m+1}\right)^2K^2$. The function $t\mapsto \sqrt{t}$ is a differentiable concave function, meaning for any $t_1,t_2>0$, $\sqrt{t_2}\leq\sqrt{t_1}+\frac{1}{2\sqrt{t_1}}(t_2-t_1)$. We therefore have
\[\sqrt{t}\leq \sqrt{T_m} +\frac{1}{2\sqrt{T_m}}\left(t-T_m\right). \]
Applying both these to \eqref{eqn:StarDelta} and rearranging, we get
\[t\leq T_m\left(1+ \frac{2B_mr_m^2\left(2^{m+1}\right)^2\sqrt{\log K}}{2(\sqrt{2}-1)C\sqrt{T_m} - B_mr_m^2\left(2^{m+1}\right)^2\sqrt{\log K}} \right). \]
Note that for small enough $\delta,$ the first term in the definition of $T_m$ dominates the second term, i.e., there exists $\delta^{(1)}_0>0$ such that $\forall \delta<\delta^{(1)}_0,$
\begin{eqnarray}
\frac{B_m}{C}r_m^2\left(2^{m+1}\right)^2K^2 &\leq& \frac{B_m}{D_m^2C}r_m^4\left(2^{m+1}\right)^2,\nonumber\\
\Rightarrow r_m^2 \geq K^2D_m^2.\label{eqn:rmSqLargerKSqDSq}
\end{eqnarray}
This means that $T_m\leq 2\frac{B_m}{D_m^2C}r_m^4\left(2^{m+1}\right)^2,$ and hence,
\begin{align*}
 t &\leq  2\frac{B_mr_m^4\left(2^{m+1}\right)^2}{D_m^2C}\left(1+\frac{2B_mr_m^2\left(2^{m+1}\right)^2\sqrt{\log K}}{2(\sqrt{2}-1)C\sqrt{\frac{B_mr_m^4\left(2^{m+1}\right)^2}{D_m^2C}}-B_mr_m^2\left(2^{m+1}\right)^2\sqrt{\log K}} \right)\\
&=2 \frac{B_mr_m^4\left(2^{m+1}\right)^2}{D_m^2C}\left(1+\frac{2D_m\sqrt{B_m\left(2^{m+1}\right)^2{\log K}}}{2(\sqrt{2}-1)\sqrt{C}-D_m\sqrt{B_m\left(2^{m+1}\right)^2{\log K}}} \right). 
\end{align*}
We note here the following lower bound on $B_m$.
\begin{align*}
B_m &= \min\limits_{w\in \mathcal{P}_K}\max\limits_{x,x'\in \mX_m, x\neq x'} \norm{x-x'}^2_{W^{-1}}\\
&\geq \min\limits_{w\in \mathcal{P}_K}\max\limits_{x,x'\in \mX_m, x\neq x'}\lambda_{min}(W^{-1})\norm{x-x'}_2^2\\
&=\min\limits_{w\in \mathcal{P}_K}\max\limits_{x,x'\in \mX_m, x\neq x'}\frac{1}{\lambda_{max}(W)}\norm{x-x'}_2^2\\
&\geq \min\limits_{w\in \mathcal{P}_K}\max\limits_{x,x'\in \mX_m, x\neq x'}\norm{x-x'}_2^2 \\
&= \max\limits_{x,x'\in \mX_m, x\neq x'}\norm{x-x'}_2^2.
\end{align*}

By using the value of $D_m$ as given in Step~6 of the algorithm, we note that 
\begin{align*}
D_m\sqrt{B_m\left(2^{m+1}\right)^2{\log K}}&= 2(\sqrt{2}-1)\sqrt{\frac{{C}}{{\max\limits_{x,x'\in \mX_m, x\neq x'}\norm{x-x'}_2^2\log K}}}\sqrt{B_m\left(2^{m+1}\right)^2{\log K}}\\
&\geq 2(\sqrt{2}-1)\sqrt{\frac{{C}}{{\max\limits_{x,x'\in \mX_m, x\neq x'}\norm{x-x'}_2^2\log K}}}\sqrt{\max\limits_{x,x'\in \mX_m, x\neq x'}\norm{x-x'}_2^2\left(2^{m+1}\right)^2{\log K}}\\
&=\left(2^{m+1}\right).2(\sqrt{2}-1)\sqrt{C}> 2(\sqrt{2}-1)\sqrt{C}.
\end{align*}
Using this we get a bound on $ t $ as:
\begin{align*}
 t &\leq 2 \frac{B_mr_m^4\left(2^{m+1}\right)^2}{D_m^2C}=2 \frac{B_mr_m^4\left(2^{m+1}\right)^2}{4(\sqrt{2}-1)^2C^2}\left(\max\limits_{x,x'\in \mX_m, x\neq x'}\norm{x-x'}_2^2\log K \right)\\
&\leq 2  B_m\left(2^{m+1}\right)^2\left[\frac{r_m^4\log K}{(\sqrt{2}-1)^2C^2} \right].
\end{align*}
Since, by assumption, $C\equiv\lambda_{min}\left(\sum\limits_{k=1}^K x_kx_k^T\right) = \Theta(1)$, we have $t\leq\lim  O\left(B_m\left(2^{m+1}\right)^2r_m^4\log K\right),~\forall\delta<\delta^{(1)}_0.$

\subsection*{Case 2. $\epsilon_m = \left(\frac{1}{2}\right)^{m+1}$.}

We have in this case that, $\frac{D_m\sqrt{C}}{r_m} \geq 1$. Applying the value of $\epsilon_m$ in eq. (\ref{eq: phase length general equation}), we obtain
\begin{align}
t-\frac{B_m}{(\sqrt{2}-1)\epsilon_m^2}D_m^2\sqrt{\log K}\sqrt{t} &\leq \frac{B_m}{\epsilon_m^2}r_m^2 + \frac{B_m}{\epsilon_m^2}K^2D_m^2\\
\Rightarrow   t-\frac{B_m}{(\sqrt{2}-1)}D_m^2\left(2^{m+1}\right)^2\sqrt{\log K}\sqrt{t} &\leq {B_m}r_m^2\left(2^{m+1}\right)^2 + {B_m}\left(2^{m+1}\right)^2K^2D_m^2.\label{eqn:StarDelta2}
\end{align}
Let $T_m:={B_m}r_m^2\left(2^{m+1}\right)^2 + {B_m}\left(2^{m+1}\right)^2K^2D_m^2.$. As before, noting that $t\mapsto \sqrt{t}$ is a concave, differentiable function, we have
\[\sqrt{t}\leq \sqrt{T_m} +\frac{1}{2\sqrt{T_m}}\left(t-T_m\right). \]
Applying this to \eqref{eqn:StarDelta2} and rearranging, we get
\[t\leq T_m\left(1+ \frac{2B_mr_m^2\left(2^{m+1}\right)^2\sqrt{\log K}}{2(\sqrt{2}-1)C\sqrt{T_m} - B_mr_m^2\left(2^{m+1}\right)^2\sqrt{\log K}} \right). \]
Going along the same lines as Case~1, we see that there exists $\delta^{(2)}_0>0$ such that $\forall~\delta<\delta^{(2)}_0$, $T_m\leq 2{B_m}r_m^2\left(2^{m+1}\right)^2$, whence
\begin{align*}
 t \leq 2 B_m\left(2^{m+1}\right)^2r_m^2.
\end{align*}
We now set $\delta_0 = \min\{\delta^{(1)}_0,\delta^{(2)}_0\}$.

\end{proof}
\section{Justification of elimination criteria}\label{sec:justificationOfEliminationCriteria}
In this section, we argue that progress is made after every phase of the algorithm. We will also show the correctness of the algorithm. Let us define a few terms which will be useful for analysis.
\par  Let $ \mS_m:=\left\{x\in \mX: {\theta^*}^T\left(x^*-x\right)< \frac{1}{2^{m}} \right\} $.  Let $B_m^*:=\min\limits_{w\in \mathcal{P}_K}\max\limits_{(x,x')\in {\color{blue}\mS_m^2}, x\neq x'}\norm{x-x'}^2_{W^{-1}}$, where $W=\sum\limits_{k=1}^{K}w^kx_kx_k$. Finally, define $T_m^*:=\frac{B_m^*}{D_m^2C}r_m^4\left(2^{m+1}\right)^2 + \frac{B_m^*}{C}r_m^2\left(2^{m+1}\right)^2K^2D_m^2$.
\par Define a sequence of favorable events $\left\{\mathcal{G}_m\right\}_{m\geq 1}$ as, \[\mathcal{G}_m:= \left\{ N_m\leq T_m^*\left(1+ \frac{2B_m^*r_m^2\left(2^{m+1}\right)^2\sqrt{\log K}}{2(\sqrt{2}-1)C\sqrt{T_m^*} - B_m^*r_m^2\left(2^{m+1}\right)^2\sqrt{\log K}} \right) \right\}\bigcap \left\{x^*\in \mathcal{X}_{m+1} \right\}\bigcap \left\{\mX_{m+1}\subseteq \mS_{m+1} \right\}.\]
\begin{remark}\label{remark: relation between N_m}
	Conditioned on the event $\mathcal{G}_{m-1}$, $x^*\in \mX_m$ and $\mX_m\subseteq \mS_m$.  Hence,  $B_m \leq B_m^*$ and $T_m\leq T_m^*$. Hence, under the event $\mathcal{G}_{m-1}$,
	\[N_m\leq T_m^*\left(1+ \frac{2B_m^*r_m^2\left(2^{m+1}\right)^2\sqrt{\log K}}{2(\sqrt{2}-1)C\sqrt{T_m^*} - B_m^*r_m^2\left(2^{m+1}\right)^2\sqrt{\log K}} \right) a.s. \]
	Note here that the right hand side is a non-random quantity. 
\end{remark}

\begin{lemma}\label{lemma: probability of good event}
	$ \prob{\mathcal{G}_m\given \mathcal{G}_{m-1},\ldots,\mathcal{G}_1} \geq 1-\delta_m $.
\end{lemma}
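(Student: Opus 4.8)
The plan is to decompose $\mathcal{G}_m$ into its three clauses and observe that, conditioned on $\mathcal{G}_{m-1},\ldots,\mathcal{G}_1$, the phase-length clause is automatic, so that all of the probability lives in the two correctness clauses $\{x^*\in\mX_{m+1}\}$ and $\{\mX_{m+1}\subseteq\mS_{m+1}\}$. Indeed, by Remark~\ref{remark: relation between N_m} this conditioning forces $x^*\in\mX_m$ and $\mX_m\subseteq\mS_m$, whence $B_m\le B_m^*$, $T_m\le T_m^*$, and Lemma~\ref{lemma:phaseLengthBound} makes the first event in $\mathcal{G}_m$ hold almost surely. It thus suffices to show that, conditioned on the past, the two correctness events occur together with probability at least $1-\delta_m$.

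The key structural observation I would record first is that within Phase~$m$ the entire sampling process---the EXP-WTS weights $w_t$, the best response $\lambda_t$, the tracking index $k_t$, and hence the design matrix $V_t^m$ and the stopping time $N_m$---is a \emph{deterministic} function of the active set $\mX_m$ and never consults the observed rewards. Therefore $V_{N_m}^m$ and $N_m$ are measurable with respect to the history at the start of the phase, and only $\hat{\theta}_m$ sees the fresh Phase~$m$ noise. Expanding the least-squares estimate yields $\hat{\theta}_m-\theta^*=\left(V_{N_m}^m\right)^{-1}\sum_s\eta_s x_{k_s}$, so for any fixed direction $y$ the scalar $y^T(\hat{\theta}_m-\theta^*)$ is $1$-sub-Gaussian with variance proxy exactly $\sum_s\big(y^T(V_{N_m}^m)^{-1}x_{k_s}\big)^2=\norm{y}^2_{(V_{N_m}^m)^{-1}}$. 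Since the design is non-adaptive, this is an elementary sub-Gaussian bound and no self-normalized concentration is needed.

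I would then define the concentration event $\mathcal{E}_m:=\big\{|(\hat{\theta}_m-\theta^*)^T(x-x')|<2^{-(m+2)}\text{ for all }x,x'\in\mX_m,\ x\neq x'\big\}$ and bound its complement conditionally. For a single pair the sub-Gaussian tail gives $\prob{|(\hat{\theta}_m-\theta^*)^T(x-x')|\ge 2^{-(m+2)}}\le 2\exp\big(-2^{-2(m+2)}/(2\norm{x-x'}^2_{(V_{N_m}^m)^{-1}})\big)$, and substituting the Key Lemma (Lemma~\ref{lemma:key lemma on uncertainity after phase m}) bound $\norm{x-x'}^2_{(V_{N_m}^m)^{-1}}\le (2^{-(m+1)})^2/(8\log(K^2/\delta_m))$ drives the exponent down to at most $-\log(K^2/\delta_m)$, so each pair fails with probability at most $2\delta_m/K^2$. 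A union bound over the at most $\binom{K}{2}<K^2/2$ pairs then gives $\prob{\mathcal{E}_m^c\given\mathcal{G}_{m-1},\ldots,\mathcal{G}_1}<\delta_m$; this is precisely where the right-hand side of the Key Lemma was calibrated so that the elimination width $2^{-(m+2)}$ and the sub-Gaussian parameter combine to the exponent $\log(K^2/\delta_m)$.

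Finally I would check that $\mathcal{E}_m$, together with $x^*\in\mX_m$, forces both correctness events, giving $\mathcal{G}_m\supseteq\mathcal{E}_m$ conditionally and hence the claim. For $\{x^*\in\mX_{m+1}\}$: for every $x'\in\mX_m$ the optimality of $x^*$ gives ${\theta^*}^T(x'-x^*)\le 0$, so on $\mathcal{E}_m$ we have $\hat{\theta}_m^T(x'-x^*)<2^{-(m+2)}$ and $x^*$ is never removed by the elimination rule. For $\{\mX_{m+1}\subseteq\mS_{m+1}\}$: any surviving $x$ must satisfy $\hat{\theta}_m^T(x^*-x)\le 2^{-(m+2)}$ (else the pair $(x,x^*)$ would eliminate $x$), and combining with $\mathcal{E}_m$ yields ${\theta^*}^T(x^*-x)<2^{-(m+1)}$, i.e.\ $x\in\mS_{m+1}$. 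I expect the main obstacle to be the conditioning/measurability bookkeeping of the second step---arguing rigorously that the design is reward-independent so that the clean sub-Gaussian bound survives conditioning on the random active set $\mX_m$---together with the careful handling of strict versus non-strict inequalities at the $2^{-(m+1)}$ boundary of $\mS_{m+1}$, which is why $\mathcal{E}_m$ is stated with strict inequalities (harmless for the tail bound).
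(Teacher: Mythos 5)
Your proof is correct and follows essentially the same route as the paper's: reduce $\mathcal{G}_m$ to the two correctness clauses (the phase-length clause being automatic under the conditioning), establish the pairwise concentration event at width $2^{-(m+2)}$ by combining the sub-Gaussianity of $y^T(\hat{\theta}_m-\theta^*)$ with the Key Lemma and a union bound calibrated to $\eta=2\delta_m/K^2$, and then run the same elimination/survival arguments. Your explicit remark that the within-phase design is reward-independent (so that $V_{N_m}^m$ and $N_m$ are fixed given $\mX_m$ and a plain sub-Gaussian tail suffices) makes rigorous a step the paper only asserts, but it is the same argument.
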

\begin{proof}[Proof of lemma \ref{lemma: probability of good event}]
	Let $y = x_i-x_j$ for some $x_i,x_j\in \mX_m, x_i\neq x_j$.  Since $\hat{\theta}_m$ is a least squares estimate of $\theta^*$, conditioned on the realization of the set $\mX_m$, $y^T\left(\hat{\theta}_m-\theta^*\right)$ is a $\norm{y}^2_{(V_{N_m}^m)^{-1}}-$sub-Gaussian random variable. 
	\par By the key lemma \ref{lemma:key lemma on uncertainity after phase m} we have that $\norm{y}^2_{(V_{N_m}^m)^{-1}} \leq \frac{1}{8\left(2^{m+1}\right)^2\log\left(K^2/\delta_m\right)}$. Using property of sub-Gaussian random variables, we write for any $\eta\in \left(0,1\right)$,
	\[\prob{\abs{y^T\left(\hat{\theta}_m-\theta^*\right)} > \sqrt{2\norm{y}^2_{(V_{N_m}^m)^{-1}} \log\left(2/\eta\right) }\given \mathcal{G}_{m-1},\ldots,\mathcal{G}_1 } \leq \eta, \]
	which implies that 
	\[\prob{\abs{y^T\left(\hat{\theta}_m-\theta^*\right)} > \sqrt{\frac{2 \log\left(2/\eta\right)} {8\left(2^{m+1}\right)^2\log\left(K^2/\delta_m\right)} }\given \mathcal{G}_{m-1},\ldots,\mathcal{G}_1 } \leq \eta. \]
	Taking intersection over all possible $y\in \mY\left(\mX_m\right)$, and setting $\eta=2\delta_m/{K^2}$, gives
	\begin{equation}\label{eq:concentration equation}
	\prob{\forall y\in \mY\left(\mX_m\right): \abs{y^T\left(\theta^*-\hat{\theta}_m \right)} \leq 2^{-(m+2)} \given \mathcal{G}_{m-1},\ldots,\mathcal{G}_1 } > 1- \delta_m.
	\end{equation}
	Conditioned on $\mathcal{G}_{m-1}$, $x^*\in\mX_m $. Let $x'\in \mX_m$ be such that $x'\notin \mS_{m+1}$. Let $y=\left(x^*-x'\right)$. Then $y\in \mY\left(\mX_m\right)$.  By eq. (\ref{eq:concentration equation}) we have with probability $\geq 1-\delta_m$:
	\[\left(x^*-x'\right)^T\left(\theta^*-\hat{\theta}_m\right) \leq 2^{-(m+2)}\Rightarrow  \hat{\theta}_m^T\left(x^*-x'\right) > 2^{-(m+1)} -2^{-(m+2)} = 2^{-(m+2)}. \]
	Thus arm $x'$ will get eliminated after phase $m$ by the elimination criteria of algorithm \ref{alg:PEPEG-mixed}(see step 25 of algorithm \ref{alg:PEPEG-mixed}). Hence $\mX_{m+1}\subseteq\mS_{m+1}$ w.p. $\geq 1-\delta_m$. 
	\newline Next, we show that conditioned on $\mathcal{G}_{m-1}$, $x^*\in \mX_{m+1}$, w.p. $\geq 1-\delta_m$. Suppose that $x^*$ gets eliminated at the end of phase $m$. This means that $\exists x'\in \mX_m$, such that $\hat{\theta}_m^T\left(x'-x^* \right) > 2^{-(m+2)}$. However, by eq. \ref{eq:concentration equation},
	\[\left(x'-x^* \right)^T\left(\hat{\theta}_m -\theta^*\right) \leq 2^{-(m+2)}\Rightarrow  {\theta^*}^T\left(x^*-x'\right) <0  \] 
	which is a contradiction. This, along with note \ref{remark: relation between N_m} shows that $\prob{\mathcal{G}_m\given \mathcal{G}_{m-1},\ldots,\mathcal{G}_1} \geq 1-\delta_m$.
	
\end{proof}

\begin{corollary}\label{corollary:correctness}
	\[\prob{\bigcap\limits_{m\geq 1} \mathcal{G}_m}\geq \prod_{m=1}^{\infty}\left(1-\frac{\delta}{m^2}\right)  \geq 1-\delta.\]
\end{corollary}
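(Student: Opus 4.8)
The plan is to obtain the corollary as a routine consequence of the per-phase guarantee in Lemma~\ref{lemma: probability of good event}, combining the chain rule of conditional probability, continuity of the measure, and an elementary product inequality. The substantive work has already been done in Lemma~\ref{lemma: probability of good event}; what remains here is essentially bookkeeping. First I would fix a finite horizon $M$ and write the chain-rule factorization
\[
\prob{\bigcap_{m=1}^{M}\mathcal{G}_m} = \prob{\mathcal{G}_1}\prod_{m=2}^{M}\prob{\mathcal{G}_m\given \mathcal{G}_{m-1},\ldots,\mathcal{G}_1}.
\]
Lemma~\ref{lemma: probability of good event} lower bounds each conditional factor by $1-\delta_m = 1-\delta/m^2$ (the $m=1$ term being handled by the same argument with an empty conditioning set), so that
\[
\prob{\bigcap_{m=1}^{M}\mathcal{G}_m} \geq \prod_{m=1}^{M}\Big(1-\frac{\delta}{m^2}\Big).
\]

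Next I would justify the factorization and pass to the limit. Since every factor $1-\delta/m^2$ is strictly positive (for the admissible range of $\delta$), an easy induction shows that each conditioning event $\bigcap_{j=1}^{m-1}\mathcal{G}_j$ has positive probability, so all the conditional probabilities above are well defined. The partial intersections $A_M := \bigcap_{m=1}^{M}\mathcal{G}_m$ form a decreasing sequence of events with limit $\bigcap_{m\geq 1}\mathcal{G}_m$, so continuity of probability from above gives
\[
\prob{\bigcap_{m\geq 1}\mathcal{G}_m} = \lim_{M\to\infty}\prob{A_M} \geq \prod_{m=1}^{\infty}\Big(1-\frac{\delta}{m^2}\Big),
\]
which is exactly the first inequality of the corollary.

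For the second inequality I would invoke the Weierstrass product bound $\prod_m(1-a_m)\geq 1-\sum_m a_m$, valid for $a_m\in[0,1]$, with $a_m=\delta/m^2$, thereby reducing the claim to a control of the summed failure budget $\sum_{m\geq 1}\delta_m$. I expect this last step to be the only delicate point, since it is precisely where the confidence is consumed: the choice $\delta_m=\delta/m^2$ is made so that the residual series converges, and care must be taken to confirm that the resulting constant genuinely meets the stated $1-\delta$ target rather than merely a $1-\mathcal{O}(\delta)$ guarantee. Everything upstream of this final summability estimate is a direct, essentially mechanical, assembly of the already-established per-phase bound.
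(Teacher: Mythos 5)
Your first inequality is handled correctly and in the only natural way: the chain-rule factorization over a finite horizon, the lower bound $1-\delta_m$ on each conditional factor from Lemma~\ref{lemma: probability of good event}, and continuity from above to pass to the infinite intersection. The paper states the corollary without proof, and this part of your argument is exactly what is implicitly intended.

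The genuine gap is the second inequality, and you have correctly sensed where it is but not closed it --- and in fact it cannot be closed as stated. The Weierstrass bound gives $\prod_{m\geq1}\bigl(1-\tfrac{\delta}{m^2}\bigr)\geq 1-\sum_{m\geq1}\tfrac{\delta}{m^2}=1-\tfrac{\pi^2}{6}\delta\approx 1-1.645\,\delta$, which is strictly weaker than $1-\delta$. Worse, the inequality $\prod_{m\geq1}(1-\delta/m^2)\geq 1-\delta$ is simply false for $\delta\in(0,1)$: by Euler's product formula, $\prod_{m\geq1}\bigl(1-\tfrac{\delta}{m^2}\bigr)=\tfrac{\sin(\pi\sqrt{\delta})}{\pi\sqrt{\delta}}=1-\tfrac{\pi^2}{6}\delta+O(\delta^2)<1-\delta$ for small $\delta$ (e.g.\ at $\delta=0.1$ the product is about $0.844<0.9$), and truncating to the finitely many phases that actually occur does not help since already $\delta_1+\delta_2=\tfrac{5}{4}\delta>\delta$. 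So the corollary as printed overstates the confidence by a constant factor. The repair is standard and harmless to the rest of the analysis: replace $\delta_m=\delta/m^2$ by any schedule with $\sum_{m\geq1}\delta_m\leq\delta$, e.g.\ $\delta_m=\tfrac{6\delta}{\pi^2m^2}$ or $\delta_m=\tfrac{\delta}{m(m+1)}$; this changes only the logarithmic constants in $r_m$ and in Theorem~\ref{theorem:sampleComplexity}. Your write-up should either make this substitution explicit or weaken the conclusion to $1-\tfrac{\pi^2}{6}\delta$; as it stands, the final step you defer is not a bookkeeping detail but the point at which the claim fails.
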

\begin{corollary}\label{corollary:finite number of phases}
	The maximum number of phases of Algorithm~\ref{alg:PEPEG-mixed} is bounded by $\log_2 \frac{1}{\Delta_{min}}$.
\end{corollary}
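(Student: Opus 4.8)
The plan is to leverage the structural invariant already established in the preceding lemma, namely that on the good event $\bigcap_{m\ge1}\mathcal{G}_m$ (which by Corollary~\ref{corollary:correctness} occurs with probability at least $1-\delta$) we have both $x^*\in\mX_m$ and $\mX_m\subseteq\mS_m$ for every phase $m$. Given this invariant, the corollary reduces to tracking how fast the nested sets $\mS_m=\left\{x\in\mX:{\theta^*}^T(x^*-x)<2^{-m}\right\}$ shrink as $m$ grows, and observing that they must collapse to the singleton $\{x^*\}$ once the threshold $2^{-m}$ drops at or below the minimum gap $\Delta_{\min}$.

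First I would fix a phase index $m$ satisfying $2^{-m}\le\Delta_{\min}$ and argue that $\mS_m=\{x^*\}$. Indeed, for any suboptimal arm $x\ne x^*$ we have ${\theta^*}^T(x^*-x)=\Delta_x\ge\Delta_{\min}\ge 2^{-m}$, so the defining strict inequality ${\theta^*}^T(x^*-x)<2^{-m}$ fails and $x\notin\mS_m$; since $x^*$ itself (with gap $0$) always lies in $\mS_m$, this yields $\mS_m=\{x^*\}$. Combining this with the invariant $\mX_m\subseteq\mS_m$ and $x^*\in\mX_m$ forces $\mX_m=\{x^*\}$, so that $\abs{\mX_m}=1$ and the outer \textbf{while} loop of Algorithm~\ref{alg:PEPEG-mixed} does not execute phase~$m$.

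It then remains to translate the threshold condition $2^{-m}\le\Delta_{\min}$ into a bound on $m$. Taking base-$2$ logarithms, $2^{-m}\le\Delta_{\min}$ is equivalent to $m\ge\log_2(1/\Delta_{\min})$, so the smallest integer phase at which the collapse to $\{x^*\}$ is guaranteed is $m=\lceil\log_2(1/\Delta_{\min})\rceil$. Consequently no phase with index exceeding $\lceil\log_2(1/\Delta_{\min})\rceil$ can run, which gives the claimed bound (up to the ceiling implicit in the statement).

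Since every ingredient is already in place, I do not anticipate a genuine obstacle; the corollary is essentially a one-line consequence of the invariant $\mX_m\subseteq\mS_m$. The only points requiring care are bookkeeping ones: keeping the strict-versus-nonstrict inequality in the definition of $\mS_m$ consistent (so that an arm whose gap equals $\Delta_{\min}=2^{-m}$ is correctly excluded from $\mS_m$), and recalling that the statement is understood conditionally on the good event under which $\mX_m\subseteq\mS_m$ holds, as supplied by Lemma~\ref{lemma: probability of good event} and Corollary~\ref{corollary:correctness}.
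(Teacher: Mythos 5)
Your proposal is correct and follows the same route as the paper's own (much terser) proof: both rely on the invariant $\mX_m\subseteq\mS_m$ under the favorable events and on the exponential shrinkage of $\mS_m$, with your write-up simply making explicit the step that $\mS_m=\{x^*\}$ once $2^{-m}\le\Delta_{\min}$. The only cosmetic difference is that you carefully track the ceiling $\lceil\log_2(1/\Delta_{\min})\rceil$, which the paper's statement elides.
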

\begin{proof}
    Recall that $\Delta_{min}=\min_{x\in\mathcal{X}:x\neq x^*}{\theta^*}^T\left(x^*-x\right).$
	The proof follows by observing that after any phase $m$, under the favorable event $\mathcal{G}_{m-1}$, $\mX_m\subseteq\mS_m$. Since the size $\mS_m$ shrinks exponentially with the number of phases $\left(\text{because}~\mS_m=\left\{ x\in\mX: {\theta^*}^T\left(x^*-x\right)<\frac{1}{2^{m}}\right\}\right)$, we have the result.
\end{proof}

\section{Proof of bound on sample complexity}\label{appendix:proofOfSampleCmpltyBound}
We begin by observing the following useful result from \cite{jamieson-etal19transductive-linear-bandits}. Recall that $$D_{\theta^*}=\max\limits_{w\in \Delta_K} \min\limits_{x\in \mX, x\neq x^*}\frac{\left({\theta^*}^T\left(x^*-x\right)\right)^2}{\norm{x^*-x}^2_{W^{-1}}}$$
\begin{proposition}[\cite{jamieson-etal19transductive-linear-bandits}]\label{propsition:Jamieson bound on B_m}
	\[\sum_{m=1}^{\log_2\frac{1}{\Delta_{min}}} \left(2^{m}\right)^2 B_m^* \leq \frac{4\log_2\left(1/\Delta_{min}\right)}{D_{\theta^*}} .\]
\end{proposition}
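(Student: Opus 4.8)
The plan is to show that each summand $(2^m)^2 B_m^*$ is bounded by the single constant $4/D_{\theta^*}$, and then add up the (at most $\log_2(1/\Delta_{\min})$) identical bounds. The one idea driving everything is that the allocation which is optimal for the \emph{global} quantity $D_{\theta^*}$ is a legitimate — if suboptimal — competitor in the minimization defining \emph{each} $B_m^*$, so I can upper bound $B_m^*$ simply by evaluating its minimax objective at this fixed allocation.

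First I would fix $w^*\in\argmax_{w\in\mP(\mX)}\min_{x\neq x^*}\frac{\left({\theta^*}^T(x^*-x)\right)^2}{\norm{x^*-x}^2_{W^{-1}}}$ and write $W^*=\sum_k (w^*)^k x_k x_k^T$ for the induced design matrix. (A one-line technical remark is warranted here: the argmax is attained at an invertible $W^*$, since $C>0$ means the arms span $\reals^d$, so any fully supported $w$ yields an invertible matrix, and a maximizing sequence may be taken to stay in this regime.) By the very definition of $D_{\theta^*}$, this $w^*$ certifies, for every suboptimal arm $x$, the pointwise inequality $\norm{x^*-x}^2_{(W^*)^{-1}}\leq \Delta_x^2/D_{\theta^*}$, where $\Delta_x:={\theta^*}^T(x^*-x)$.

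Next I would exploit the structure of $\mS_m$. Every $x\in\mS_m$ satisfies $\Delta_x<2^{-m}$ (and $x^*\in\mS_m$ since its gap is $0$), so combining with the previous inequality gives $\norm{x^*-x}^2_{(W^*)^{-1}}<2^{-2m}/D_{\theta^*}$ for each non-optimal $x\in\mS_m$. To pass from these \enquote{distance-to-$x^*$} bounds to the pairwise distances appearing in $B_m^*$, I would write $x-x'=(x^*-x')-(x^*-x)$ and use $\norm{a-b}^2_{(W^*)^{-1}}\leq 2\norm{a}^2_{(W^*)^{-1}}+2\norm{b}^2_{(W^*)^{-1}}$, yielding for any $x,x'\in\mS_m$
\[
\norm{x-x'}^2_{(W^*)^{-1}}\leq 2\norm{x^*-x}^2_{(W^*)^{-1}}+2\norm{x^*-x'}^2_{(W^*)^{-1}}<\frac{4}{\left(2^m\right)^2 D_{\theta^*}}.
\]
The case where one of $x,x'$ equals $x^*$ only shrinks the left side, so it is subsumed. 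Taking the maximum over pairs and then using $w^*$ as a feasible point in the minimization defining $B_m^*$ gives $B_m^*\leq 4/\left(\left(2^m\right)^2 D_{\theta^*}\right)$, i.e.\ $\left(2^m\right)^2 B_m^*\leq 4/D_{\theta^*}$. Summing this uniform per-phase bound over $m=1,\dots,\log_2(1/\Delta_{\min})$ produces exactly $4\log_2(1/\Delta_{\min})/D_{\theta^*}$, which is the claim.

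I expect no serious obstacle: the computation is short, and the only points demanding care are the bookkeeping around the suboptimal arm $x^*$ inside the pairwise expansion and the invertibility of $W^*$ that makes the $(W^*)^{-1}$-norm well defined. The conceptual crux — and the step I would emphasize — is the recognition that the single globally optimal allocation $w^*$ simultaneously serves as a good competitor for every phase's \emph{restricted} minimax problem, which is precisely what decouples the sum into identical constant terms rather than requiring a separate, phase-dependent allocation for each $B_m^*$.
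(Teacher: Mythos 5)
Your proof is correct, and it is essentially the argument behind the cited result: the paper itself states this proposition without proof (deferring to Fiez et al.), and the standard derivation there is exactly your route — use the optimal allocation $w^*$ attaining $D_{\theta^*}$ as a feasible competitor in each $B_m^*$, bound $\norm{x^*-x}^2_{(W^*)^{-1}}\leq\Delta_x^2/D_{\theta^*}<2^{-2m}/D_{\theta^*}$ for $x\in\mS_m$, and pass to pairwise differences via $\norm{a-b}^2_{(W^*)^{-1}}\leq 2\norm{a}^2_{(W^*)^{-1}}+2\norm{b}^2_{(W^*)^{-1}}$ to get the uniform per-phase bound $\left(2^m\right)^2B_m^*\leq 4/D_{\theta^*}$. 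Your handling of the $x=x^*$ case and of the attainment/invertibility of $W^*$ is adequate, so nothing is missing.
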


Using proposition \ref{propsition:Jamieson bound on B_m} we now give a bound on the asymptotic sample complexity of algorithm \ref{alg:PEPEG-mixed}.
\begin{theorem}
 With probability at least $1-\delta$, PEPEG returns the optimal arm after $\tau$ rounds, with
	\begin{equation*}
	\begin{split}
	\tau \leq \left(2048\frac{\log_2\left(1/\Delta_{min}\right)}{D_{\theta^*}}\left[\frac{\left(\log\left(\left(\log_2\left(1/\Delta_{min}\right)\right)^2K^2/\delta\right)\right)^2\log K}{(\sqrt{2}-1)^2C^2} \right] \right)+ \\ \left(256\frac{\log_2\left(1/\Delta_{min}\right)}{D_{\theta^*}}\log\left(\left(\log_2\left(1/\Delta_{min}\right)\right)^2K^2/\delta\right)\right).
	\end{split}
	\end{equation*}
\end{theorem}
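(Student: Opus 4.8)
The plan is to work on the favourable event $\mathcal{G}:=\bigcap_{m\geq 1}\mathcal{G}_m$. By Corollary~\ref{corollary:correctness} this event has probability at least $1-\delta$, and on it PELEG is correct (it returns $x^*$, by the arguments in Lemma~\ref{lemma: probability of good event}) and, by Corollary~\ref{corollary:finite number of phases}, terminates in at most $M:=\log_2(1/\Delta_{\min})$ phases. Since the total sample count is $\tau=\sum_{m=1}^{M}N_m$, it suffices to bound each phase length $N_m$ and sum. The engine for the per-phase bound is Lemma~\ref{lemma:phaseLengthBound}; the engine for the summation is Proposition~\ref{propsition:Jamieson bound on B_m}.

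First I would invoke Lemma~\ref{lemma:phaseLengthBound}. Exactly one of the two cases for $\epsilon_m$ is active in any given phase, so I can upper bound $N_m$ by the sum of the two case-expressions, giving $N_m\leq 2B_m(2^{m+1})^2\frac{r_m^4\log K}{(\sqrt2-1)^2C^2}+2B_m(2^{m+1})^2 r_m^2+2$. Conditioned on $\mathcal{G}_{m-1}$, Remark~\ref{remark: relation between N_m} yields $B_m\leq B_m^*$, which is the crucial step that replaces the random, data-dependent quantity $B_m$ by the deterministic surrogate $B_m^*$ that Proposition~\ref{propsition:Jamieson bound on B_m} is stated for. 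I would then control the radius uniformly over phases: since $\delta_m=\delta/m^2$ and $m\leq M$, we have $K^2/\delta_m\leq M^2K^2/\delta$, so $r_m^2=8\log(K^2/\delta_m)\leq 8\log(M^2K^2/\delta)=:\bar r^2$, independent of $m$.

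Pulling $\bar r^2$ (resp.\ $\bar r^4$) out of the sum and applying Proposition~\ref{propsition:Jamieson bound on B_m} in the form $\sum_m (2^{m+1})^2B_m^*=4\sum_m(2^m)^2B_m^*\leq 16\,M/D_{\theta^*}$ collapses both pieces. The quadratic-in-$\bar r^2$ piece gives the constant $2\cdot 64\cdot 16=2048$ together with the factor $(\log(M^2K^2/\delta))^2\log K/((\sqrt2-1)^2C^2)$, and the linear-in-$\bar r^2$ piece gives $2\cdot 8\cdot 16=256$ with the factor $\log(M^2K^2/\delta)$; the residual $\sum_m 2\leq 2M$ is lower order and absorbed. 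Substituting $M=\log_2(1/\Delta_{\min})$ reproduces the two summands in the claimed bound.

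The main obstacle I anticipate is bookkeeping rather than conceptual. One must justify that replacing $N_m$ by its deterministic upper bound is legitimate on the nested events $\mathcal{G}_{m-1},\ldots,\mathcal{G}_1$ (so that the random stopping structure does not corrupt the telescoped sum), and verify that $\bar r^2$ is a valid uniform radius given that $M$ is itself only a high-probability bound on the number of phases---both of which are exactly what conditioning on $\mathcal{G}$ provides. Finally, one must check that the smallness threshold $\delta_0$ from Lemma~\ref{lemma:phaseLengthBound} is respected, i.e.\ that $\delta$ is small enough for the dominant-term simplification $T_m\leq 2B_m r_m^4(2^{m+1})^2/(D_m^2C)$ to hold in each phase; since there are only finitely many phases this reduces to a single smallness condition on $\delta$.
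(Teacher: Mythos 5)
Your proposal is correct and follows essentially the same route as the paper: bound each $N_m$ via Lemma~\ref{lemma:phaseLengthBound} (taking the sum of the two case-expressions), replace $B_m$ by $B_m^*$ on the favourable events, bound $\log(K^2/\delta_m)$ uniformly by $\log((\log_2(1/\Delta_{\min}))^2K^2/\delta)$, and telescope with Proposition~\ref{propsition:Jamieson bound on B_m}; your constants $2048$ and $256$ match the paper's computation. Your extra remarks about conditioning on the nested events and the $\delta_0$ threshold are careful bookkeeping that the paper itself treats only implicitly, but they do not change the argument.
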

\begin{proof}
	The proof follows from Lemma~\ref{lemma:phaseLengthBound} (phase length bound), Corollary~\ref{corollary:finite number of phases} (bound on number of phases), Prop.~\ref{propsition:Jamieson bound on B_m} above and the fact that the sum of several non negative quantities is bigger than their max. 
	
	To begin with, the discussion in Sec.~\ref{sec:justificationOfEliminationCriteria} shows that in every phase, $B_m\leq B^*_m.$ Next, Lemma~\ref{lemma:phaseLengthBound} gives us (w.h.p),
	\begin{align*}
	   \tau &= \sum_{m=1}^{\log_2\left(1/\Delta_{min}\right)}N_m\\
	         &\leq \sum_{m=1}^{\log_2\left(1/\Delta_{min}\right)}\max\left\lbrace 2 B_m^*\left(2^{m+1}\right)^2\left[\frac{r_m^4\log K}{(\sqrt{2}-1)^2C^2} \right],2B_m^*\left(2^{m+1}\right)^2r_m^2\right\rbrace\\
	         &\leq \sum\limits_{m=1}^{\log_2\left(1/\Delta_{min}\right)}  2 B_m^*\left(2^{m+1}\right)^2\left[\frac{r_m^4\log K}{(\sqrt{2}-1)^2C^2} \right] +  \sum\limits_{m=1}^{\log_2\left(1/\Delta_{min}\right)} 2 B_m^*\left(2^{m+1}\right)^2r_m^2
	\end{align*}
	Hence, using the fact that $r_m=\sqrt{8\log{K}^2/\delta_m}$ and invoking Prop.~\ref{propsition:Jamieson bound on B_m} we get
	\begin{align*}
	\begin{split}
 \tau &\leq \sum\limits_{m=1}^{\log_2\left(1/\Delta_{min}\right)}  512B_m^*\left(2^{m}\right)^2\left[\frac{\left(\log\left(K^2/\delta_m\right)\right)^2\log K}{(\sqrt{2}-1)^2C^2} \right]\\  & \quad \quad \quad+ \sum\limits_{m=1}^{\log_2\left(1/\Delta_{min}\right)}       64 B_m^*\left(2^{m+1}\right)^2\log\left(K^2/\delta_m\right)\\
	&\stackrel{(\ast)}{\leq}  2048\frac{\log_2\left(1/\Delta_{min}\right)}{D_{\theta^*}}\left[\frac{\left(\log\left(\left(\log_2\left(1/\Delta_{min}\right)\right)^2K^2/\delta\right)\right)^2\log K}{(\sqrt{2}-1)^2C^2} \right]\\& \quad \quad \quad+       256\frac{\log_2\left(1/\Delta_{min}\right)}{D_{\theta^*}} \log\left(\left(\log_2\left(1/\Delta_{min}\right)\right)^2K^2/\delta\right),
	\end{split}
	\end{align*}
	where $(\ast)$ follows from the fact that $\frac{K^2}{\delta_m}=\frac{m^2K^2}{\delta}\leq\frac{\left(\log_2\left(1/\Delta_{min}\right)\right)^2K^2}{\delta}.$
\end{proof}

\section{Experiment Details}
In this section, we provide some details on the implementation of each algorithm. Each experiment
was repeated 50 times and the errorbar plots show the mean sample complexity with 1-standard deviations. 
\begin{itemize}
\item For implementation of PELEG, as mentioned in Sec. \ref*{Experiments}, we ignore the intersection with the ball $B(0,D_m)$ in the phase stopping criterion. This helps in implementing a closed form expression for the stopping rule. The learning rate parameter in the EXP-WTS subroutine is set to be equal to $(1/D_m^2)\sqrt{8\log K/t}$.  
\item LinGapE: In the paper of \citep{XuAISTATS} LinGapE was simulated using a greedy arm
selection strategy that deviates from the algorithm that is analyzed. We instead implement the
LinGapE algorithm in the form that it is analyzed.
\item For implementation of RAGE, ALBA and $\mathcal{XY-}$ORACLE, we have used the code provided in the Supplementary material of Fiez et al \cite{jamieson-etal19transductive-linear-bandits}. We refer the readers to Appendix Sec. F of \cite{jamieson-etal19transductive-linear-bandits} for further details of their implementations.
\end{itemize}
\end{document}